\newtheorem{theorem}{Theorem}
\newtheorem{lemma}{Lemma}
\newtheorem{proposition}{Proposition}
\newtheorem{assumption}{Assumption}
\begin{document}
\title{Decentralized Multi-Task Learning Based on Extreme Learning Machines}

\author{Yu~Ye,~\IEEEmembership{Student~Member,~IEEE,}
	Ming~Xiao,~\IEEEmembership{Senior~Member,~IEEE,}
	and~Mikael~Skoglund,~\IEEEmembership{Fellow,~IEEE }
	
	\thanks{ Yu Ye, Ming Xiao and Mikael Skoglund are with the School of Electrical Engineering and Computer Science, Royal Institute of Technology (KTH), Stockholm, Sweden (email: yu9@kth.se, mingx@kth.se, skoglund@kth.se).}
}
  
\maketitle
 
\begin{abstract}
 
In multi-task learning (MTL), related tasks learn jointly to improve generalization performance. To exploit the high learning speed of extreme learning machines (ELMs), we apply the ELM framework to the MTL problem, where the output weights of ELMs for all the tasks are learned collaboratively. 
We first present the ELM based MTL problem in the centralized setting, which is solved by the proposed MTL-ELM algorithm. 
Due to the fact that many data sets of different tasks are geo-distributed, decentralized machine learning is studied. 
We formulate the decentralized MTL problem based on ELM as majorized multi-block optimization with coupled bi-convex objective functions. To solve the problem, we propose the DMTL-ELM algorithm, which is a hybrid Jacobian and Gauss-Seidel Proximal multi-block alternating direction method of multipliers (ADMM). Further, to reduce the computation load of DMTL-ELM, DMTL-ELM with first-order approximation (FO-DMTL-ELM) is presented. Theoretical analysis shows that the convergence to the stationary point of DMTL-ELM and FO-DMTL-ELM can be guaranteed conditionally. Through simulations, we demonstrate the convergence of proposed MTL-ELM, DMTL-ELM and FO-DMTL-ELM algorithms, and also show that they can outperform existing MTL methods. Moreover by adjusting the dimension of hidden feature space, there exists a trade-off between communication load and learning accuracy for DMTL-ELM. 
 
\end{abstract}

\begin{IEEEkeywords}
  Multi-task feature learning; extreme learning machine; decentralized optimization  
\end{IEEEkeywords}
\IEEEpeerreviewmaketitle

\section{Introduction}

 Machine learning usually requires a large amount of training samples to obtain an accurate learner, e.g., the deep neural network with a tremendous number of model parameters. However, in many real-word applications, it is hard to collect enough samples for training. One efficient solution when in shortage of data is Multi-Task Learning (MTL) \cite{caruana1997multitask}, of which the main goal is to improve generalization performance by leveraging the domain-specific information contained in the training samples of tasks. In MTL, the learning tasks are assumed to be related to each other, and it is found that learning them jointly can achieve better generalization performance rather than learning individually.

 The MTL approach seeks to learn the relationship of model parameters among tasks, and it has been extensively investigated \cite{evgeniou2004regularized,zhou2016flexible,argyriou2007multi,argyriou2008convex,amit2007uncovering,kang2011learning,kumar2012learning,romera2012exploiting,zhang2012convex,li2018better,chen2013convex}. In \cite{evgeniou2004regularized}, it is assumed that the model parameters of task $t$ can be represented as $\bm{w}_t = \bm{w}_0 + \bm{v}_t$, where $\bm{w}_0$ is common across tasks and $\bm{v}_t$ is task specific deviation. 
 The clustered multi-task learning (CMTL) is studied in \cite{zhou2016flexible}, which learned the clustered structure of tasks by identifying representative ones. While in \cite{argyriou2007multi,argyriou2008convex,amit2007uncovering,kang2011learning,romera2012exploiting,kumar2012learning}, the multi-task feature learning (MTFL) is considered, based on the assumption that the model parameters of task $t$ can be given by $\bm{w}_t=\bm{U}\bm{a}_t$. The feature space $\bm{U}$ is common for all the tasks while $\bm{a}_t$ is task specified. 
 In \cite{argyriou2007multi,argyriou2008convex} and \cite{amit2007uncovering }, $\ell_1$-norm penalty and $\ell_2$-norm penalty are considered in the objective functions, respectively.
 In \cite{kang2011learning}, the tasks are assumed to be in disjoint groups. 
 While in \cite{kumar2012learning},
 the tasks in different groups are allowed to overlap with each other in bases. 
 In \cite{chen2013convex}, the model parameters for task $t$ is assumed to be $\bm{w}_t=\bm{u}_t+\varTheta \bm{v}_t$, where $\varTheta$ captures the correlation among tasks.
 Combining the assumptions in \cite{evgeniou2004regularized} and \cite{argyriou2007multi}, the shared features and shared parameters are considered simultaneously through assuming $\bm{w}_t = \bm{U}(\bm{a}_0+ \bm{a}_t)$. In this way both feature relatedness and model relatedness can be modeled.  
 
 In \cite{evgeniou2004regularized,zhou2016flexible,argyriou2007multi,argyriou2008convex,amit2007uncovering,kang2011learning,kumar2012learning,romera2012exploiting,zhang2012convex,li2018better,chen2013convex}, the data from all the tasks are in a central location. However in many real-world applications, the data of different tasks may be separately located in different machines. Thus the centralized methods may be inefficient.  
MTL on distributed networks is an active research area and has attracted lots of research interests.
 Recently, various distributed MTL problems have been studied \cite{nassif2016proximal,li2017distributed,li2018distributed,zhang2018distributed}. Utilizing the equivalent convex optimization formulation in \cite{argyriou2008convex}, which characterizes the correlation between model parameters $\bm{w}_t$ by a matrix $\bm{\Omega}$, the distributed multi-task relationship learning is studied in \cite{liu2017distributed,smith2017federated,wang2018distributed}. 
 In \cite{wang2016distributed1}, a communication-efficient estimator based on the debiased lasso is presented. Reference \cite{hua2017distributed} learned a shared predictive structure for tasks by extending \cite{chen2013convex} to a distributed setting.  
 Following the assumptions of MTFL in \cite{argyriou2007multi}, two communication-efficient subspace pursuit algorithms are provided in \cite{wang2016distributed2}. However, the strategies in \cite{wang2016distributed2} can only be implemented in master-salve network structures. 
 
 For the centralized MTL problems studied in \cite{argyriou2008convex,zhang2012convex,kang2011learning,kumar2012learning,argyriou2007multi}, \textit{alternating optimization} (AO) method is utilized to obtain the solutions. In each iteration of the AO method, one variable is optimized while another is fixed \cite{beck2015convergence}. Yet in distributed MTL (DMTL), objectives and variables may be separated and the AO method is invalid. Hence one of the challenges for DMTL is to design distributed optimization algorithms. Confronted with this problem, the alternating direction method of multipliers (ADMM) \cite{boyd2011distributed} has been shown as an efficient solution, which was used to solve a global consensus problem with regularization in \cite{luo2017distributed}. Extending the \textit{global consensus problem} \cite{boyd2011distributed} to multi-block convex optimization with separable objectives and linear constraints, a Jacobian Proximal ADMM is proposed in \cite{Deng2017}, where the variables are updated in a parallel manner. The ADMM based algorithms for multi-block convex optimizations with coupled objectives are provided in \cite{xu2018hybrid,Gao2017,cui2016convergence,beck2013convergence,liu2018proximal,Banjac2018}. A Gauss-Seidel type ADMM with first-order approximation is provided to solve the general multi-block optimization with coupled objective function in \cite{Gao2017}. Different from the Jacobian type method, the variables are sequentially optimized for Gauss-Seidel method. In \cite{hua2017distributed}, the optimization method that integrates block coordinate descent method (BCD) with the inexact ADMM is utilized for distributed learning. While a BCD for regularized multi-convex optimization is considered in \cite{Xu2013}. Furthermore, the ADMM in nonconvex and nonsmooth optimization is studied in \cite{Wang2019}.  
 
 Since the tasks are trained together both for centralized and distributed MTL, the model will become complicated and hence significantly reduce the training speed, especially for neural networks (NN) with multiple layers. To balance the training speed and the generalization performance of MTL and DMTL, the extreme learning machine (ELM) for single-hidden layer feed-forward neural networks (SLFNs) can be utilized for the basic tasks.
 Because there is only one hidden layer in ELM, and the hidden nodes can be chosen randomly, the output weights of SLFNs can be analytically determined. According to the analysis and experiments in \cite{huang2006extreme}, the ELM can provide good generalization performance in most learning cases and learn thousands of times faster than conventional popular learning algorithms for feed-forward neural networks. The distributed extreme learning machine is studied in \cite{luo2017distributed,bi2015distributed}, where one single task is distributed to several workers in parallel.

Based on above observation, we will study the MTL approach with ELM implementation by considering both centralized and decentralized scenarios. Different from \cite{evgeniou2004regularized,zhou2016flexible,argyriou2007multi,argyriou2008convex,amit2007uncovering,kang2011learning,kumar2012learning,romera2012exploiting,zhang2012convex,li2018better,chen2013convex,nassif2016proximal,li2017distributed,li2018distributed,zhang2018distributed,liu2017distributed,smith2017federated,wang2018distributed,wang2016distributed1,wang2016distributed2,hua2017distributed}, the input data for tasks is first randomly mapped to a hidden feature space. In our MTL method \cite{caruana1997multitask}, the weights of ELM for related tasks are correlated. Hence instead of learning the weights of ELM individually, the tasks can leverage training data sets from each other to obtain model parameters with better generalization performance. This can be achieved for the centralized case since all the data sets are available in a single node. However, when tasks are located separately, transmitting training datasets across tasks is costly, as well as may cause the privacy leakage problem. Thus we utilize the decentralized optimization method to update each local weights through exchanging intermediate information. To the best of our knowledge, MTL or DMTL with ELM implementation has not been studied before. The main contributions of this paper are listed as follows.
 \begin{itemize}
 	\item We study the centralized multi-task learning machine based on ELM (MTL-ELM). Then based on \textit{Alternating Optimization} (AO) method, the MTL-ELM algorithm is proposed to obtain shared subspace across tasks and the task-specified weights. We show that the proposed MTL-ELM algorithm can be guaranteed to converge to a stationary point of the problem;
 	\item The MTL-ELM problem is further considered for the decentralized scenario. We extend the centralized learning problem by introducing sharing variables for localized task objectives as well as a consensus constraint. Then the algorithm DMTL-ELM of hybrid Jacobian and Gauss-Seidel Proximal multi-block alternating direction method of multipliers (ADMM) is proposed to solve the decentralized learning problem; 
 	\item We generalize the decentralized learning problem as majorized multi-block optimization problem with coupled bi-convex objective functions, which was not been studied before, to our best knowledge. Through theoretical analysis, we prove that the DMTL-ELM converges to a stationary point when algorithm parameters meet specific conditions. 
 	\item To reduce the computation load of DMTL-ELM, we propose the FO-DMTL-ELM algorithm, which utilizes the first-order approximation in the update process. The convergence of FO-DMTL-ELM to stationary points are proved. 
 	\item By simulations, we show the convergence of algorithms MTL-ELM, DMTL-ELM and FO-DMTL-ELM. By experiments on real-world datasets, we also show that MTL-ELM, DMTL-ELM and FO-DMTL-ELM can outperform state-of-the-art MTL approaches in terms of the generalization performance. 	
 	Moreover, the trade-off between communication load and learning accuracy for DMTL-ELM and FO-DMTL-ELM are shown through experiments.
 \end{itemize}

The rest of this paper is organized as follows. In Section I, we formulate the ELM based MTL problem in a centralized setting, which is solved by the proposed MTL-ELM algorithm. Then the DMTL problem with ELM implementation is presented in Section III. Two fully decentralized ADMM approach DMTL-ELM and FO-DTML-ELM are provided, as well as the analysis of convergence. Numerical results are given in Section IV to show the generalization performance of the proposed approaches. Finally, we draw conclusions in Section V.      
 
\section{Multi-task learning based on ELM}
 In this section, we will first present the principles of ELM. Then we shall integrate the ELM with centralized MTL by modifying the models of ELM.
\subsection{Principles of ELM}

ELM refers to a class of single-hidden-layer feed-forward neural networks (FNNs) shown in Fig. 1 (a), where the hidden layer needs not be tuned. The output of an ELM network with $L$ hidden nodes is formulated as  
\begin{equation}
Y_L(\bm{X})=\sum\nolimits_{l=1}^{L}\beta_lh_l(\bm{X})=\bm{h}(\bm{X})\bm{\beta},
\end{equation}
where $\bm{\beta}= \left[\beta_1,...,\beta_L  \right]\in \mathbb{R}^{L}$ is the output weights. $\bm{h}(\bm{X})= \left[h_1(\bm{X}),...,h_L(\bm{X}) \right]$ is the feature mapping $\bm{h}: \mathbb{R}^n\to  \mathbb{R}^L$, which maps input variable $\bm{X}\in \mathbb{R}^n$ to $L$-dimension hidden-layer feature space. The component $h_l(\bm{X})=G(\bm{w}_l,b_l,\bm{X}): \mathbb{R}\to\mathbb{R}$ denotes the impulse function of the $l$-th hidden node given by activation function $G(\bm{w}_l,b_l,\bm{X})=g(\bm{w}_l \bm{X}+b_l)$ (e.g. sigmoid function: $g(x)=1/(1+\exp(-x))$, where the weights $\bm{w}_l$ and bias $b_l$ are randomly generated according to any continuous probability distribution \cite{huang2012extreme}. The output weights $\bm{\beta}$ need to be learned from a training data set $D =  \{ (\bm{X}_i,T_i  ) | \bm{X}\in\mathbb{R}^n,T_i\in \mathbb{R}^d,i=1,..., N  \}$ by solving
\begin{equation}
 \min_{\bm{\beta}}~\frac{1}{2}\big\| H\bm{\beta}-T \big\| ^2 + \frac{\mu }{2} \big\|\bm{\beta}  \big\|^2,
\end{equation}
where the regularization term $\frac{\mu}{2} \|\cdot \|^2$ can make the resultant solution stabler and tend to have better generalization performance if the trade-off parameter $\mu $ is chosen appropriately \cite{huang2012extreme}.
$H\in\mathbb{R}^{N\times L}$ represents the hidden layer output matrix,
\begin{equation}
H =\left[\begin{matrix}
\bm{h}(\bm{X}_{1}) \\
\vdots \\
\bm{h}(\bm{X}_{N})\end{matrix}\right]= 	
\left[\begin{matrix}
h_1(\bm{X}_{1})  & ... & h_L(\bm{X}_{1}) \\
\vdots &\vdots & \vdots \\
h_1(\bm{X}_{N})  & ... & h_L(\bm{X}_{N})
\end{matrix}\right].
\end{equation} 
$T=\left[T_1,...,T_N\right]^T$ denotes the outputs of training data.
 According to \cite{huang2012extreme}, the closed-form solution for (2) can be obtained as 
\begin{equation}
\bm{\beta}^* =  \big( H^TH + \mu I  \big)^{-1}H^TT.
\end{equation} 
Thus the output of ELM for $x$ is
\begin{equation}
y_L(x)=\bm{h}\left(x\right)  \big( H^TH + \mu I  \big)^{-1}H^TT.
\end{equation}

\begin{figure}
	\vskip 0.2in
	\centering
	\subfigure[Typical ELM for task $t$.]{
		\includegraphics[width=63 mm]{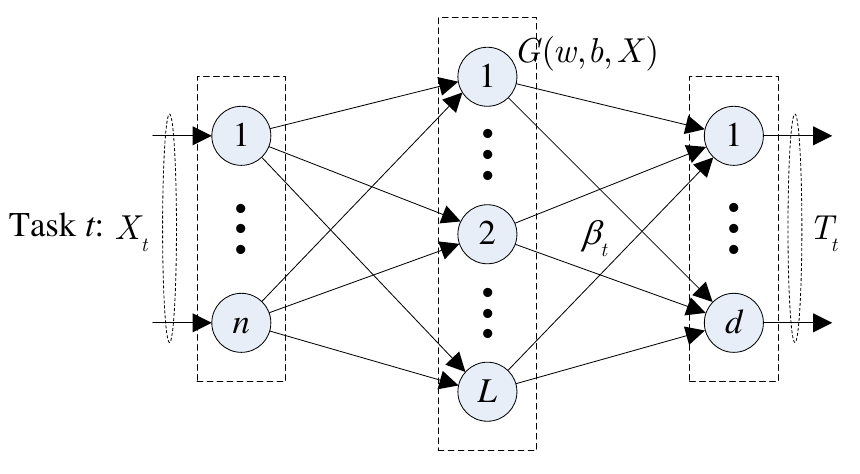}  		 
	}
	\subfigure[ELM based multitask learning machine for task $t$.]{ 
		\includegraphics[width=80 mm]{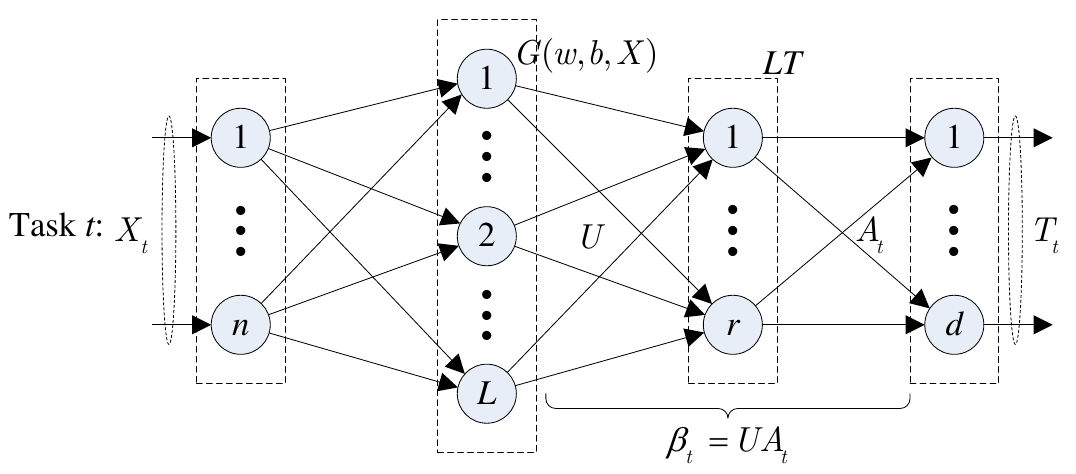}  	 
	}
	\caption{Two different model structures for task $t$: (a) typical ELM; (b) multi-task learning based on ELM with a common shared subspace $U$ and task specified weight $A_t$. }  
	\vskip -0.2in
\end{figure}

 ELM can be implemented on a single machine since the parameters are obtained explicitly by (5).

\subsection{Centralized MTL with ELM } 
 
In what follows, we apply the ELM method to MTL, where each task implements its own ELM scheme. Through learning a shared feature space of the hidden layer among tasks to transfer the knowledge of training data, we present how each task can improve the generalization performance. Suppose that there are $m$ tasks with each data set as $D_t= \{ (\bm{X}_{ti},T_{ti} )|i=1,...,N_t,t=1,...,m  \}  $, where $N_t$ is the number of training samples. Therefore the corresponding hidden layer feature space for task $t$ is expressed as $H_t = [ \bm{h}(\bm{X}_{t1})^T, ... ,\bm{h}(\bm{X}_{tN_t})^T  ]^T\in \mathbb{R}^{N_t\times L} $.  
The output of ELM for task $t$ is $f_t=H_t\bm{\beta}_t$. Following \cite{kumar2012learning}, we assume that there are $r$ latent basis tasks in MTL and each observed task can be represented as a subset of these basis tasks. Thus there
exists a subspace of the hidden layer feature space, with which all the tasks can share related representation. Hence from \cite{kumar2012learning}, the output weights can be evaluated by $\bm{\beta}_t=UA_t$, where $U\in \mathbb{R}^{L\times r}$ presents the predictive structure of the tasks while $A_t\in\mathbb{R}^{r\times d}$ determines the grouping structure. 
But different from \cite{kumar2012learning}, in the multi-task feature learning based on ELM (MTL-ELM), the input data $\bm{X}$ is first randomly mapped into a hidden feature space. Then we can conclude the structure of MTL-ELM as Fig. 1 (b), where a common \textit{Linear Transform} (LT) layer is stacked between the hidden layer and output layer for each task. $U$ is the weight for LT layer and shared among tasks. $A_t$ is the output weight and is task specific.

To obtain the optimal feature space $U$ and $\{A_t \}$, the weights for the LT layer and task-specified output layer, we will solve the following problem (6), which considers $\ell_2$-norm penalty of both $U$ and $\{A_t |t=1,...,m\}$,

\begin{equation} \label{eq:1}
\min_{U,\bm{A}}~ \sum\nolimits_{t } \frac{1}{2}  \big\| H_tUA_t-T_t  \big\| ^2 + \frac{\mu_1}{2} \big\| U\big\|^2  +  \frac{\mu_2}{2} \big\|\bm{A} \big\|^2 ,
\end{equation}
where $\bm{A}= [A_1^T,...,A_m^T ]^T$ and $T_t \in  \mathbb{R} ^{N_t\times d}$. $\mu_1$ and $\mu_2$ are the trade-off parameters between the training error and regularization. It is assumed that each ELM utilizes the same number of hidden nodes $L$. The random weights and biases of hidden neurons are the same as $\left\{\bm{w}_l,b_l|l=1,...,L\right\}$.

The cost function in (6) is convex in $U$ for fixed $\bm{A}$, and is convex in $A_t$ for a fixed $U$. However it is not jointly convex.   
Hence we adopt \textit{alternating optimization} (AO) method to achieve a local minimum for problem (6). When $\bm{A}^k$ is fixed, for the iteration $k+1$, the problem reduces to 
\begin{equation}
U^{k+1}:=\arg\min_{U} \sum\nolimits_{t} \frac{1}{2} \big\|H_tUA_t^k-T_t \big\|^2+\frac{\mu_1}{2}  \big\|U \big\|^2.
\end{equation}
(7) is convex in $U$ and a closed-form solution can be obtained for MTL-ELM.
Setting the derivative to zero we can obtain
\begin{equation}
\sum\nolimits_{t } H_t^TH_tU^{k+1}A_t^kA_t^{kT} + \mu_1 U^{k+1} = \sum\nolimits_{t } H_t^TT_tA_t^T  ,
\end{equation}
 where we refer to ${ (X^k )}^T$ as $X^{kT}$ for simplicity. We apply vectorization on both sides of equation (8). According to the property of Kronecker product $\text{vec}(AXB)= (B^T\otimes A )\text{vec}(X)$, we have
\begin{equation}
\begin{aligned}
\text{vec}\big(U^{k+1}\big) =& \big( \sum\nolimits_{t } \big(A_t^k A_t^{kT}\big) \otimes \big(H_t^T H_t\big)  + \mu_1 I \big)^{-1} \sum\nolimits_{t } \text{vec}  \big(H_t^TT_tA_t^{kT} \big).
\end{aligned}
\end{equation}

\begin{algorithm} [t]
	\SetAlgoLined
	\KwIn{Hidden layer feature space $\{H_t|t=1,..., m\}$}
	\KwOut{$U$ and $\{ A_t|t=1,..., m\}$ }
	initialization: $ \{A_t^0=\bm{1} |t=1,...,m \}$\;
	\For{$k=0,1,...$ }{					 
		Update $U^{k+1}$ according to (9)\;
		\For{$t=1$ {\bfseries to} $m$}{
			Update $A_t^{k+1}$ according to (11)\;
		}
	}	
	\caption{MTL-ELM}
\end{algorithm}

When $U^{k+1}$ is fixed, the problem reduces to solving (10) for each task separately,
\begin{equation}
A_t^{k+1}:=\arg\min_{A_t } \frac{1}{2} \big\|H_tU^{k+1}A_t-T_t \big\|^2 + \frac{\mu_2}{2}  \big\|A_t \big\| ^2.
\end{equation}
(10) is convex over $A_t$ and the solution can be easily obtained from \cite{huang2006extreme}, where
\begin{equation}
A_t^{k+1} =  \big( U^{(k+1)T}H_t^TH_tU^{k+1} + \mu_2 I  \big)^{-1} U^{(k+1)T}H_t^TT_t   .
\end{equation}
Then we propose the multi-task learning based on ELM algorithm (MTL-ELM) to solve (6). In MTL-ELM, each task initializes $A^0_t=\bm{1} $. Then the iterations are executed, and the shared representation $U$ and $\{A_t\}$ are successively optimized in each iteration. Based on the analysis in \cite{beck2015convergence}, we provide the convergence of MTL-ELM as follow.

 \begin{lemma}
 	Let $\{U^k,\bm{A}^k \}$ be the sequence generated by MTL-ELM. Then with $k\to \infty$, the sequence $\{U^k,\bm{A}^k \}$ converges to a stationary point $(U^*,\bm{A}^*)$ of problem (6).
 \end{lemma}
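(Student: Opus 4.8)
The plan is to recognize the MTL-ELM algorithm as exact two-block \emph{alternating minimization} applied to the smooth objective of (6), and to invoke the convergence theory of \cite{beck2015convergence}. Write $F(U,\bm{A})$ for the cost in (6). First I would record the two structural facts that make the machinery apply: (i) $F$ is continuously differentiable, being a sum of squared-norm terms; and (ii) because $\mu_1>0$ and $\mu_2>0$, the map $U\mapsto F(U,\bm{A})$ is strongly convex for every fixed $\bm{A}$, and the map $\bm{A}\mapsto F(U,\bm{A})$ is strongly convex for every fixed $U$. Consequently each block subproblem (7) and (10) has a \emph{unique} minimizer, given in closed form by (9) and (11); this uniqueness is exactly what the alternating-minimization convergence results require in the absence of joint convexity.

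Next I would establish the two standard bookkeeping properties. Since $U^{k+1}$ minimizes $F(\cdot,\bm{A}^k)$ and each $A_t^{k+1}$ minimizes the corresponding block of $F(U^{k+1},\cdot)$, the values are monotonically non-increasing, $F(U^{k+1},\bm{A}^{k+1})\le F(U^{k+1},\bm{A}^{k})\le F(U^{k},\bm{A}^{k})$, and they are bounded below by $0$, so $\{F(U^k,\bm{A}^k)\}$ converges. The regularizers $\frac{\mu_1}{2}\|U\|^2+\frac{\mu_2}{2}\|\bm{A}\|^2$ also make $F$ coercive in $(U,\bm{A})$ jointly, so the sublevel set $\{F\le F(U^0,\bm{A}^0)\}$ is bounded; hence the whole iterate sequence lies in a compact set and possesses at least one limit point $(U^*,\bm{A}^*)$.

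I would then show that every limit point is stationary. Using the strong-convexity descent inequalities, e.g. $F(U^{k},\bm{A}^{k})-F(U^{k+1},\bm{A}^{k})\ge \frac{\mu_1}{2}\|U^{k+1}-U^{k}\|^2$ and the analogous bound for the $\bm{A}$-step, together with the fact that the left-hand sides tend to $0$ along the convergent value sequence, gives the vanishing-increment property $\|U^{k+1}-U^{k}\|\to 0$ and $\|\bm{A}^{k+1}-\bm{A}^{k}\|\to 0$. Passing to the limit in the first-order optimality conditions $\nabla_U F(U^{k+1},\bm{A}^k)=0$ and $\nabla_{\bm{A}} F(U^{k+1},\bm{A}^{k+1})=0$ of the block updates, and using continuity of $\nabla F$ together with the vanishing increments, yields $\nabla_U F(U^*,\bm{A}^*)=0$ and $\nabla_{\bm{A}} F(U^*,\bm{A}^*)=0$, i.e. $\nabla F(U^*,\bm{A}^*)=0$, so $(U^*,\bm{A}^*)$ is a stationary point of (6).

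The hard part is upgrading this subsequential statement to convergence of the \emph{entire} sequence to a single stationary point, as the lemma asserts: a priori the set of limit points could be a continuum. The vanishing-increment property forces this limit set to be connected and to consist entirely of stationary points, which already constrains the accumulation behaviour; to collapse it to one point I would appeal to the fact that $F$ is a real-analytic (indeed polynomial) function and therefore satisfies the Kurdyka--\L{}ojasiewicz property, from which the standard finite-length argument gives convergence of $\{U^k,\bm{A}^k\}$ to a unique stationary point. Alternatively, one may invoke the two-block convergence theorem of \cite{beck2015convergence} directly, whose hypotheses --- smoothness together with uniqueness of each block minimizer --- we have verified above.
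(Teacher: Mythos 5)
Your proof is correct and follows essentially the same route as the paper: both view MTL-ELM as exact two-block alternating minimization and lean on the analysis of \cite{beck2015convergence}, using the strong convexity induced by $\mu_1,\mu_2>0$ to get unique block minimizers in closed form, monotone descent with a lower bound, vanishing increments, and stationarity of limit points by passing to the limit in the block optimality conditions. Your extra Kurdyka--\L{}ojasiewicz/finite-length step is a worthwhile strengthening rather than a deviation, since the Beck-style argument alone only delivers that every limit point is stationary, while the lemma as stated asserts convergence of the entire sequence to a single stationary point.
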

 \begin{proof}
 See Appendix A.
 \end{proof}

\section{Decentralized multi-task learning with ELM }
\subsection{Motivation and basics}
In many real-world applications, the data of different tasks may be geo-distributed over different machines. However, the MTL-ELM algorithm proposed in different locations can only work on a single machine, which has the whole data sets of all tasks. Due to the heavy communication load caused by transmitting the data or the constraint of data privacy and security, it may be impossible to send data of different tasks to a master machine to perform the MTL-ELM algorithm. Therefore we consider the decentralized multi-task feature learning based on ELM (DMTL-ELM), where the shared representation $U$ for hidden layer feature is obtained through information exchange between tasks. Meanwhile, different from the distributed ELM (DELM) in \cite{luo2017distributed} where one identical output is desired, we learn the distinctive predictive model for each task based on its local data distribution.  

In DMTL-ELM, we consider a setup with $m$ tasks distributed over $m$ agents, i.e., one agent for one task. We denote the multi-agent network as an undirected graph $\mathcal{G}=\{\mathcal{V},\mathcal{E}\}$, where $\mathcal{V}=\{1,...,m\}$ is the agents and $\mathcal{E}$ includes the connections. 
\begin{assumption}
	The undirected graph $\mathcal{G}$ is connected.
\end{assumption}
\noindent
Assumption 1 implies that any two agents in the network can always influence each other in the long run.
Each agent implements its own ELM scheme locally with the same number of hidden neurons as $L$ and the identical random weights and biases as $\{\bm{w}_l,b_l|l=1,...,L \}$. The data set for task $t$ is $D_t (t=1,...,m )$, which is the same as that in centralized MTL. We assume that the data sets $\{D_t\}$ and hidden feature space $\{H_t\}$ of each task cannot be shared due to costly communication load or security consideration, while other information transmission among agents is allowed. 
 
The goal of DMTL-ELM is to find the sharing representation $U$ cooperatively across agents, as well as the predictive model $\{A_t|t=1,...,m\}$ for each agent by minimizing the overall expected loss. 
One naive approach to solving (6) in a decentralized way is to exchange the information $ (A_t^kA_t^{kT} )\otimes  (H_t^TH_t )$ and $H_t^TT_tA_t^{kT}$ among agents. However, when dimensions $L$ and $r$ are large, the amount of exchanged information becomes huge. This significantly increases communication load. Another limitation is that this strategy can only work efficiently in the master-slave structure, where the master gathers all the exchanged information.
Hence we will solve problem (6) with decentralized optimization method instead, where an identical and optimal subspace $U$ is ensured among agents. In the following, we will consider the decentralized network structure and propose the corresponding multi-task learning algorithm.

\subsection{Problem Formulation} 

Since each agent can only access to its own data set in the decentralized learning setup, the global optimal solution for $U$ cannot be obtained directly if data sharing is not allowed across agents.   
In order to find optimal $U$, information on local weights needs to be exchanged among agents. Thus, we exploit the alternating direction method of multipliers (ADMM) to ensure all machines agree with the same subspace by solving the following problem,    

\begin{equation}
\begin{aligned}
\min_{\bm{U},\bm{A}} &~\sum\nolimits_{t}  \big(\frac{1}{2}  \big\| H_tU_tA_t-T_t  \big\| ^2 + \frac{\mu_1}{2m} \big\|U_t \big\|^2 + \frac{\mu_2}{2} \big\|A_t \big\|^2  \big),\\
\vphantom{\frac{1}{2}}s.t.  &~ \sum\nolimits_{t} C_tU_t=\bm{0}. 
\end{aligned}
\end{equation}
Compared with the centralized scenario, agent $t$ holds a local subspace variable $U_t$. To ensure a unique subspace, the constraint of $\left\{U_t \right\}$ is introduced. For any $(i,j)\in \mathcal{E} (i,j\in\mathcal{V})$, the subspaces of $U_i=U_j$ is required. The $C_t\in\mathbbm{R}^{ |\mathcal{E}| L\times L}$ can be deduced from $\mathcal{G}$. 
 
According to ADMM \cite{wang2018distributed}, the augmented Lagrangian function for problem (16) is given by
\begin{equation}
\begin{aligned}
	\vphantom{\frac{1}{2}}\mathcal{L}   \big(   \bm{U },  \bm{A} ,\lambda   \big)&= \sum\nolimits_{t }  \big(\frac{1}{2} \big\|H_tU_tA_t-T_t \big\|^2+\frac{\mu_1}{2m} \big\|U_t \big\|^2 +  \\
&	\vphantom{\frac{1}{2}} \frac{\mu_2}{2 } \big\|A_t \big\|^2  \big)  + \lambda^T\sum\nolimits_{t } C_tU_t  + \frac{\rho}{2} \big\| \sum\nolimits_{t } C_tU_t   \big\|^2,
\end{aligned}
\end{equation}
where $\lambda$ is a Lagrange multiplier and $\bm{U}= [U_1^T,...,U_m^T  ]^T$, $\rho>0$. Since Lagrangian $\mathcal{L}$ is separable in $U_t$ and $A_t$, we can optimize them in parallel across the agents. However, since $U_t$ and $A_t$ are coupled variables, we can follow AO to update them sequentially. Hence from the iteration of proximal Jacobi ADMM \cite{Deng2017}, we update the variables by
 
\begin{algorithm} [t]
	\SetAlgoLined
	\KwIn{ $\{H_t,T_t,C_t,P_t,Q_t,|t=1,..., m\}$, $\delta$} 
	\KwOut{$\{U_t,A_t|t=1,..., m\}$} 
	Initialization: $ \{U^0_t=\bm{1},A_t^0=\bm{1} |t=1,...,m \} $, $\lambda^0=\bm{0}$\;

	\For{$k=0,1,...$   }{ 
		
		{\bfseries agents $t=1$ to $m$}:\\
		Update $U_t^{k+1}$ \textit{in parallel} by (19), and share with the neighbouring agents\;  
		Pick $\gamma_i^{k+1}\in\big(0,\frac{\delta \| \hat{C}_i(\bm{U}^{k }- \bm{U}^{k+1})  \|^2 }{  \| \hat{C}_i\bm{U}^{k+1}  \|^2 }\big], i=1,...,|\mathcal{E}|$\;
		Update $\lambda^{k+1}:=\lambda^k -\rho \bm{\gamma}^{k+1}  \sum_{t} C_tU_t^{k+1}$.\\
		Update $A_t^{k+1}$ \textit{in parallel} by (21).
		
	}
	
	\caption{DMTL-ELM}
\end{algorithm}

\begin{align}
U_t^{k+1}:=&\arg \min_{U_t}~\mathcal{L}  \big( U_t,\bm{U}_{-t}^k,A_t^k,\lambda^k  \big)+\frac{1}{2}  \big\|U_t-U_t^k  \big\|^2_{P_t},  \\
A_t^{k+1}:=&\arg \min_{A_t}~\mathcal{L}  \big( U_t^{k+1},A_t,\lambda^k  \big) + \frac{1}{2}  \big\|A_t-A_t^k   \big\|^2_{Q_t}, \\
\vphantom{\frac{1}{2}}\lambda^{k+1}:=&\lambda^k-\rho\bm{ \gamma}^{k+1} \sum\nolimits_{t}C_tU_t^{k+1}, 
\end{align} 
where $\bm{U}_{-t}\triangleq \{U_1,...,U_{t-1},U_{t+1},...,U_m \}$. Since $\{U_t\}$ and $\{A_t\}$ are matrices, for $z\in\mathbbm{R}^{n\times m }$, we define the \textit{G-norm} as $ \|z\|^2_G=\text{tr}(z^TGz)$. Without loss of generality, we assume $\{P_t\}$ and $\{Q_t\}$ are diagonal matrices of which the diagonals are positive.
In the updated process of $U_t^{k+1}$, it is equivalent to solve
\begin{equation}
\begin{aligned}
U_t^{k+1}:=\arg \min_{U_t} & \frac{1}{2}  \big\|H_tU_tA_t^k-T_t  \big\|^2 +\frac{\mu_1}{2m} \big\|U_t \big\|^2 + \lambda^{kT}C_tU_t+\\
&\frac{\rho}{2}   \big\|C_tU_t+\sum\nolimits_{i\neq t}C_iU_i^k  \big\|^2+\frac{1}{2}  \big\|U_t-U_t^k    \big\|^2_{P_t}.
\end{aligned}	
\end{equation}
Setting the derivative of (17) to $0$ leads to 
\begin{equation}
\begin{aligned}
H_t^TH_t&U_t^{k+1}A^k_t {A_t}^{kT} +  \big( \frac{\mu_1}{m}I + \rho C_t^TC_t + P_t \big)U_t^{k+1}= \\
	\vphantom{\frac{1}{2}}&H_t^TT_t{A_t}^{kT} -\rho C_t^T  \sum\nolimits_{i\neq t}C_iU_i^k- C_t^T \lambda^k  + P_tU_t^k.
\end{aligned} 
\end{equation}
Applying vectorization on both sides of (19), we obtain
\begin{equation}
\begin{aligned}
	\vphantom{\frac{1}{2}}\text{vec}  \big(U_t^{k+1} \big) &=  \big( \big(A_t^kA_t^{kT}  \big)\otimes   \big(H_t^TH_t  \big)+ I\otimes   \big( \frac{\mu_1}{m}I + \rho C_t^TC_t + P_t    \big)   \big)^{-1}\\
 	\vphantom{\frac{1}{2}}& \text{vec} \big( H_t^TT_t{A_t}^{kT}- \rho C_t^T \sum\nolimits_{i\neq t}C_iU_i^k - C_t^T\lambda^k + P_tU_t^k  \big).
\end{aligned}
\end{equation} 
While updating $A_t^{k+1}$ as (10), it is equivalent to solve
\begin{equation}
\begin{aligned}
A_t^{k+1}:=\arg\min_{A_t} \frac{1}{2} \big\|  H_t&U_t^{k+1}A_t-T_t \big\|^2+ \\& \frac{\mu_2}{2 }\big\|A_t\big\|^2 + \frac{1}{2}  \big\| A_t-A_t^k  \big\|_{Q_t}^2 .
\end{aligned}
\end{equation}
Setting the derivative to $0$, we get the optimal solution of (21) as
\begin{equation} 
\begin{aligned}
	\vphantom{\frac{1}{2}}A_t^{k+1}= & \big(U_t^{(k+1)T}H_t^TH_tU_t^{k+1}+Q_t+ \mu_2 I\big)^{-1}\cdot\\	\vphantom{\frac{1}{2}}&\big(U_t^{(k+1)T}H_t^TT_t+Q_tA_t^k  \big).
\end{aligned}
\end{equation}

With above analysis, we summarize the DMTL-ELM algorithm in Algorithm 2, where $\bm{\gamma}=\text{blkdiag}(\gamma_1I,...,\gamma_{|\mathcal{E}|}I )\in\mathbbm{N}^{|\mathcal{E}|L\times|\mathcal{E}|L}$, $\bm{C}=[C_1,\cdots,C_m]$ and $\bm{C}=[\hat{C}_1^T,...,\hat{C}_{|\mathcal{E}|}^T]^T$ with $\hat{C}_i\in\mathbbm{R}^{L\times mL}$. $\delta(>0)$ is predetermined parameter and $\bm{\gamma}$ can balance the primal and dual residuals, which will be given by following analysis. In the $k$-th iteration of Algorithm, $U_t$ and $A_t$ are updated successively at each agent. Moreover, the update process of $U_t$ and $A_t$ is carried out in parallel across agents. Hence the DMTL-ELM approach is a hybrid Jacobian and Gauss-Seidel Proximal multi-block ADMM algorithm. Since a unique subspace is expected for all agents, the update of $U_t^{k+1}$ requires the information of $U_t^{k}$ and $ \{U_i^{k}|i\in\mathcal{V}_i  \}$, where $\mathcal{V}_i$ includes the agents that connect with agent $t$. $\{A_t\}$, as the local predictive model, are specific among agents, and are updated privately without any sharing.

\subsection{Computation efficient DMTL-ELM}
Regarding the proposed algorithm DMTL-ELM, the update of $U_t$ and $A_t$ will consume substantial computing resource since the matrix inverse needed to be calculated in each iteration. This becomes especially severe when the dimension of hidden feature $L$ and the number of basic tasks $r$ are large. To reduce the burden on the computation of DMTL-ELM, we consider applying a first-order approximation for the updating process of $U_t$. Denoting $F_t\left(U_t,A_t\right) =\frac{1}{2} \|H_tU_tA_t-T_t \|^2 + \frac{\mu_1}{2m}  \|U_t  \|^2+\frac{\mu_2}{2 }  \|A_t  \|^2$, the function is approximated by $F_t(U_t,A_t^k)\approx F_t(U_t^k,A_t^k) + \langle \nabla_{U_t}F_t (U_t^k,A_t^k ),U_t-U_t^k  \rangle $ with fixed $A_t^k$. Then the update of $U_t$ for agent $t$ can be evaluated as
\begin{equation}
\begin{aligned}
 \vphantom{\frac{1}{2}}U_t^{k+1}:=\arg\min_{U_t}~&\big\langle \nabla_{U_t}F_t \big(U_t^k,A_t^k \big),U_t-U_t^k  \big\rangle+\lambda^{kT}C_tU_t+\\
& \frac{\rho}{2} \big\| C_tU_t+ \sum\nolimits_{i\neq t}  C_iU_i^k\big\|^2 + \frac{1}{2}  \big\|U_t-U_t^k \big\|^2_{P_t}.\\
\end{aligned}
\end{equation} 
Hence the closed form representations of $U_t^{k+1}$ is given by
\begin{equation}
	\begin{aligned}
		\vphantom{\frac{1}{2}}U_t^{k+1}=& \big(\rho C_t^TC_t + P_t  \big)^{-1} \big( -H_t^TH_tU_t^{k } A^k_t {A_t}^{kT} + H_t^TT_t{A_t}^{kT} - \\
	&	\vphantom{\frac{1}{2}}\frac{\mu_1}{m}U_t^k-\rho C_t^T  \sum\nolimits_{i\neq t}C_iU_i^k- C_t^T \lambda^k  + P_tU_t^k \big). 
	\end{aligned}
\end{equation}
 
%

Comparing (23) with (19), the calculation of inversion becomes fixed, which is only associated with the connection constraint $C_t$ and penalties $\rho,P_t$. Hence the complexity of computation reduces. Moreover, if we further introduce the approximation for the update of $A_t$, (21) can be reduced to the gradient descent method. By substituting (19) with (23) in algorithm 2, we can obtain the Algorithm 3, namely first-order DMTL-ELM (FO-DMTL-ELM).
 
\subsection{Convergence Analysis}
In what follows, we will analyze the convergence properties of the proposed DMTL-ELM and FO-DMTL-ELM algorithms. Denoting $F(\bm{U},\bm{A})=\sum_{t}F_t (U_t,A_t )=\sum_{t}(f_t (U_t,A_t )+g_1 (U_t )+g_2 (A_t ))$, where $f_t (U_t,A_t )=\frac{1}{2}  \| H_tU_tA_t-T_t   \|^2$, $g_1(\cdot)=\frac{\mu_1}{2m}\|\cdot\|^2$ and $g_2(\cdot) =\frac{\mu_2}{2 }\|\cdot\|^2$, then problem (12) can be generalized as
\begin{equation}
\begin{aligned}
\vphantom{\frac{1}{2}}\min_{\bm{U},\bm{A} } &\sum\nolimits_{t} (f_t (U_t,A_t )+g_1 (U_t )+g_2 (A_t ) ), ~
s.t.~ \bm{C}\bm{U}+\bm{D}\bm{A}=\bm{b},
\end{aligned}
\end{equation} 
where $\bm{D}=\bm{0}$ and $\bm{b}=\bm{0}$.
It is worth noting that $F_t(U_t,A_t)$ can only be convex when either $U_t$ or $A_t$ is fixed. Hence (24) is a majorized multi-block problem with coupled bi-convex objective functions. A more generalized presentation of (24) is the majorized model, which tries to solve  
\begin{equation}
\begin{aligned}
\vphantom{\frac{1}{2}}\min_{\bm{U},\bm{A} } \phi \left(\bm{U} ,\bm{A} \right) +  \sum\nolimits_{t} \varphi_t (U_t ) + \sum\nolimits_{t}\psi_t  (A_t  ), ~s.t.~\bm{C}\bm{U}+\bm{D}\bm{A}=\bm{b}. 
\end{aligned}	
\end{equation}
From the Multi-block ADMM algorithm in \cite{Gao2017}, problem (25) can be solved by the BCD method with Gauss-Seidel type, which optimizes the variable sequentially while fixing the remaining blocks at their last updated values. However, the sequential update behavior is non-efficient for solving the decentralized optimization problem. 
In \cite{xu2018hybrid}, a hybrid Jacobian and Gauss-Seidel proximal block coordinate update (BCU) method is presented to solve a linearly constrained multi-block structured problem with a quadratic term in the objective. In \cite{Banjac2018} it shows that problem (25) can also be solved by a regularized version of the Jacobi algorithm. However, all the references mentioned above assume that the objective function is convex, which hence cannot be applied to solving (24). Though the methods for multi-convex and nonconvex optimization are provided in \cite{Xu2013} and \cite{Wang2019}, they only focus on studying the algorithms with Gauss-Seidel type.

The fully Gauss-Seidel update usually performs better than the fully Jacobian update empirically \cite{xu2018hybrid}, we integrate Jacobi-Proximal ADMM \cite{Deng2017} with Gauss-Seidel update in Algorithm 2 to solve problem (24) in a hybrid way, which is presented as the DMTL-ELM. 
Because $f_t(\cdot,\cdot)$ is bi-convex and $g_1 (\cdot)$ and $g_2 (\cdot)$ are strongly convex functions, following inequalities are useful for proving the convergence of DMTL-ELM and FO-DMTL-ELM, where $ \langle A,B  \rangle = \text{tr} (A^TB  ) (A,B\in\mathbbm{R}^{n\times m})$.
\begin{proposition} For any $ U^1_t,U^2_t \in \bm{\mathcal{U}}$ and $A_t^1,A_t^2\in \bm{\mathcal{A}}$, we have
	\begin{equation}\label{eq:2}
	\begin{aligned}
		&\vphantom{\frac{1}{2}}	f_t  \big(U_t^1,A_t^1 \big)-f_t  \big(U_t^2,A_t^1  \big) \geq  \big\langle  \nabla f_t \big(U_t^2,A_t^1\big), U_t^1-U_t^2     \big\rangle ,\\
		&\vphantom{\frac{1}{2}}	f_t  \big(U_t^1,A_t^1 \big)-f_t  \big(U_t^1 ,A_t^2  \big) \geq  \big\langle  \nabla f_t \big(U_t^1 ,A_t^2\big), A_t^1-A_t^2     \big\rangle .
	\end{aligned}
	\end{equation}
With the strong convexity of $g_1 (\cdot )$ and $g_2 (\cdot)$, we have
\begin{equation}\label{eq:4}
\begin{aligned}
&\vphantom{\frac{1}{2}} g_1\big(U_t^1\big) - g_1\big(U_t^2\big)\geq \big \langle g'_1\big(U_t^2\big), U_t^1-U_t^2 \big\rangle +\frac{\sigma}{2}  \big\| U_t^1-U_t^2  \big\|^2  , \\
& \vphantom{\frac{1}{2}} g_2\big(A_t^1\big) - g_2\big(A_t^2\big)\geq \big\langle g'_2\big(A_t^2\big), A_t^1-A_t^2 \big\rangle +\frac{\sigma}{2}  \big\| A_t^1-A_t^2  \big\|^2.
\end{aligned}
\end{equation}  
\end{proposition}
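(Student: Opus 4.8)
The plan is to recognize both pairs of inequalities as standard first-order characterizations of convexity, specialized to the concrete functions at hand: the pair in (26) is the gradient inequality for a differentiable convex function applied in each block separately, while the pair in (27) is the defining inequality of strong convexity, which for the quadratic regularizers $g_1$ and $g_2$ in fact holds with equality. So rather than invoking abstract theorems I would verify each case by direct computation.

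First I would establish that $f_t$ is bi-convex. Fixing $A_t$, the map $U_t \mapsto H_t U_t A_t - T_t$ is affine in $U_t$, so $f_t(\cdot, A_t)$ is the composition of this affine map with the convex squared norm $\frac{1}{2}\|\cdot\|^2$, hence convex in $U_t$; an identical argument gives convexity in $A_t$ for fixed $U_t$. The block gradients, obtained by routine matrix differentiation, are $\nabla_{U_t} f_t = H_t^T(H_t U_t A_t - T_t) A_t^T$ and $\nabla_{A_t} f_t = U_t^T H_t^T(H_t U_t A_t - T_t)$. Applying the first-order convexity condition $\phi(y) - \phi(x) \geq \langle \nabla\phi(x), y-x\rangle$ to $\phi = f_t(\cdot, A_t^1)$ at the pair $(U_t^1, U_t^2)$ yields the first line of (26), and applying it to $\phi = f_t(U_t^1, \cdot)$ at $(A_t^1, A_t^2)$ yields the second line, with the trace inner product $\langle A, B\rangle = \text{tr}(A^T B)$ matching the paper's convention.

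For (27) I would compute directly, since the regularizers are quadratic. With $g_1(U) = \frac{\mu_1}{2m}\|U\|^2$ we have $g_1'(U) = \frac{\mu_1}{m}U$, and expanding $\|U_t^1\|^2 = \|(U_t^1 - U_t^2) + U_t^2\|^2$ gives the exact identity $g_1(U_t^1) - g_1(U_t^2) = \langle g_1'(U_t^2), U_t^1 - U_t^2\rangle + \frac{\mu_1}{2m}\|U_t^1 - U_t^2\|^2$, so the stated inequality holds (with equality) for modulus $\mu_1/m$; the same computation for $g_2(A) = \frac{\mu_2}{2}\|A\|^2$ gives modulus $\mu_2$. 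Taking $\sigma = \min\{\mu_1/m, \mu_2\}$ then makes both lines of (27) hold simultaneously with the single constant used in the subsequent analysis.

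There is no genuine obstacle here: the result is a packaging of textbook convexity facts, and the only care required is in (i) getting the matrix-valued block gradients right so the trace inner products in (26) are correct, and (ii) fixing $\sigma$ consistently across the two regularizers. The real purpose of the proposition is downstream, where these per-block estimates supply the descent and sufficient-decrease bounds needed for the convergence proofs of DMTL-ELM and FO-DMTL-ELM.
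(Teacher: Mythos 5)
Your proposal is correct and follows essentially the same route as the paper: the paper's own proof simply asserts that the first pair of inequalities follows directly from the bi-convexity of $f_t$ and that the second pair holds because $g_1$ and $g_2$ are strongly convex quadratics, which is exactly what you verify, just with the gradients and expansions written out. The one place you improve on the paper is the modulus: the paper claims $\sigma\in(0,2)$ suffices, which is only valid under implicit assumptions on $\mu_1/m$ and $\mu_2$, whereas your choice $\sigma=\min\{\mu_1/m,\,\mu_2\}$ is the precise condition under which both lines of (27) hold with a single constant.
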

\begin{proof}	
   From the bi-convexity of function $f_t(\cdot,\cdot)$, (26) can be shown directly. 
   Since $\sigma\in(0,2)$ can make (27) satisfied, $g_1(\cdot)$ and $g_2(\cdot)$ are strongly convex.
\end{proof}

Since $\rho C_t^TC_t$ is diagonal matrix, we follow the \textit{prox-linear case} to set diagonal matrices $P_t = \tau_t I-\rho C_t^TC_t$ and $Q_t = \zeta_t I$. To prove the convergence of (FO-)DMTL-ELM, Our analysis mainly focuses on showing that the augmented Lagrangian $\mathcal{L}$ defined in (13) is lower bounded and monotonically non-increasing with iterations. Defining $\sigma_{t,\text{max}}$ as the largest eigenvalue of $C_t^TC_t$, then we have the following result.
\begin{lemma}(Sufficient descent of $\mathcal{L}$ during $\bm{U}$ and $\lambda$ update)
	For a given $\delta>0$, choosing $\gamma_i^{k+1}\in\big(0, \frac{\delta \| \hat{C}_i(\bm{U}^k- \bm{U}^{k+1} )  \|^2 }{  \| \hat{C}_i\bm{U}^{k+1}  \|^2 }\big](i=1,...,|\mathcal{E}|)$ and $\tau_t\geq \rho m (\delta+\frac{1}{2})\sigma_{t,\max}-\frac{\sigma}{2}$, the iterations in DMTL-ELM satisfy
	\begin{equation}
		\vphantom{\frac{1}{2}}\mathcal{L}\big(\bm{U}^k,\bm{A}^k,\lambda^k\big)-\mathcal{L}\big(\bm{U}^{k+1},\bm{A}^{k},\lambda^{k+1}\big)\geq 0.
	\end{equation} 
\end{lemma}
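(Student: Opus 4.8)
The plan is to separate the claimed one-sided decrease into the contribution of the parallel primal update (with the multiplier frozen at $\lambda^k$) and the contribution of the dual update (with the iterate frozen at $\bm{U}^{k+1}$):
\[
\mathcal{L}\big(\bm{U}^k,\bm{A}^k,\lambda^k\big)-\mathcal{L}\big(\bm{U}^{k+1},\bm{A}^k,\lambda^{k+1}\big)=\Delta_{\bm U}+\Delta_\lambda,
\]
where $\Delta_{\bm U}=\mathcal{L}(\bm{U}^k,\bm{A}^k,\lambda^k)-\mathcal{L}(\bm{U}^{k+1},\bm{A}^k,\lambda^k)$ and $\Delta_\lambda=\mathcal{L}(\bm{U}^{k+1},\bm{A}^k,\lambda^k)-\mathcal{L}(\bm{U}^{k+1},\bm{A}^k,\lambda^{k+1})$. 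Writing $d_t=U_t^{k+1}-U_t^k$, I would show that $\Delta_{\bm U}$ is large enough to absorb the possibly negative $\Delta_\lambda$, and that the stated threshold on $\tau_t$ is exactly what makes the remainder a nonnegative quadratic in the $d_t$.

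First I would lower bound $\Delta_{\bm U}$. For each agent, $U_t^{k+1}$ is the exact minimizer of the proximal subproblem (17), so its first-order condition writes $\nabla_{U_t}f_t(U_t^{k+1},A_t^k)+g_1'(U_t^{k+1})$ as $-C_t^T\lambda^k-\rho C_t^T\big(C_tU_t^{k+1}+\sum_{i\neq t}C_iU_i^k\big)-P_td_t$. Substituting this into the bi-convexity estimate $f_t(U_t^k,A_t^k)-f_t(U_t^{k+1},A_t^k)\ge\langle\nabla f_t(U_t^{k+1},A_t^k),-d_t\rangle$ from the first line of (26), together with the strong-convexity estimate for $g_1$ from the first line of (27), and then summing over $t$, the linear $\lambda$-terms telescope against the linear term of $\mathcal{L}$ while a cross term $\rho\langle\bm{C}(\bm{U}^{k+1}-\bm{U}^k),\bm{C}\bm{U}^k\rangle$ appears. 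The prox-linear choice $P_t=\tau_tI-\rho C_t^TC_t$ is precisely what cancels the Hessian $\rho C_t^TC_t$ of the quadratic penalty, collapsing $\rho\|C_td_t\|^2+\|d_t\|^2_{P_t}$ to $\tau_t\|d_t\|^2$. After reassembling the global penalty $\tfrac{\rho}{2}\|\bm{C}\bm{U}\|^2$, this should leave
\[
\Delta_{\bm U}\ \ge\ \sum_t\Big(\tau_t+\tfrac{\sigma}{2}\Big)\|d_t\|^2-\tfrac{\rho}{2}\big\|\bm{C}(\bm{U}^{k+1}-\bm{U}^k)\big\|^2.
\]

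Next I would handle $\Delta_\lambda$. Since only the term $\lambda^T\bm{C}\bm{U}$ of $\mathcal{L}$ depends on the multiplier, the dual ascent increases $\mathcal{L}$ by $\rho\sum_i\gamma_i^{k+1}\|\hat{C}_i\bm{U}^{k+1}\|^2$, so $\Delta_\lambda=-\rho\sum_i\gamma_i^{k+1}\|\hat{C}_i\bm{U}^{k+1}\|^2$. The admissible window $\gamma_i^{k+1}\in\big(0,\delta\|\hat{C}_i(\bm{U}^k-\bm{U}^{k+1})\|^2/\|\hat{C}_i\bm{U}^{k+1}\|^2\big]$ is designed exactly so that $\rho\sum_i\gamma_i^{k+1}\|\hat{C}_i\bm{U}^{k+1}\|^2\le\rho\delta\sum_i\|\hat{C}_i(\bm{U}^{k+1}-\bm{U}^k)\|^2=\rho\delta\|\bm{C}(\bm{U}^{k+1}-\bm{U}^k)\|^2$, whence $\Delta_\lambda\ge-\rho\delta\|\bm{C}(\bm{U}^{k+1}-\bm{U}^k)\|^2$. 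Adding the two estimates and applying $\|\bm{C}(\bm{U}^{k+1}-\bm{U}^k)\|^2=\|\sum_tC_td_t\|^2\le m\sum_t\|C_td_t\|^2\le m\sum_t\sigma_{t,\max}\|d_t\|^2$ yields
\[
\Delta_{\bm U}+\Delta_\lambda\ \ge\ \sum_t\Big(\tau_t+\tfrac{\sigma}{2}-\rho m\big(\delta+\tfrac12\big)\sigma_{t,\max}\Big)\|d_t\|^2,
\]
which is nonnegative under the stated condition $\tau_t\ge\rho m(\delta+\tfrac12)\sigma_{t,\max}-\tfrac{\sigma}{2}$, establishing (28).

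The main obstacle is the mismatch created by the Jacobian (parallel) nature of the $U_t$-update: each agent minimizes its subproblem against the stale neighbours $\bm{U}_{-t}^k$, so the sum of the per-agent subproblem decreases does not coincide with the decrease of the single global penalty $\tfrac{\rho}{2}\|\sum_tC_tU_t\|^2$. Reconciling the per-agent first-order conditions with that global penalty, and in particular controlling the residual cross term $\tfrac{\rho}{2}\|\sum_tC_td_t\|^2$ via $m\sum_t\sigma_{t,\max}\|d_t\|^2$, is the delicate bookkeeping step and is what forces the factor $\rho m$ into the threshold for $\tau_t$. The prox-linear cancellation $P_t+\rho C_t^TC_t=\tau_tI$ and the $\gamma_i^{k+1}$ window are the two supporting devices that make the final sign work out.
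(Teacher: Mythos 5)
Your proposal is correct and follows essentially the same route as the paper's own argument: splitting the claimed descent into the primal decrease $\Delta_{\bm U}$ (bounded via the first-order optimality condition of (17), the bi-convexity and strong-convexity inequalities (26)--(27), and the prox-linear cancellation $P_t+\rho C_t^TC_t=\tau_t I$) and the dual change $\Delta_\lambda$ (absorbed through the $\gamma_i^{k+1}$ window together with $\big\|\sum_t C_t d_t\big\|^2\le m\sum_t\sigma_{t,\max}\|d_t\|^2$) reproduces exactly the paper's descent constant $c_{t,1}=\tau_t+\tfrac{\sigma}{2}-\rho m\big(\delta+\tfrac{1}{2}\big)\sigma_{t,\max}$ appearing in (30). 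One small remark: you treat the multiplier step as dual ascent, $\lambda^{k+1}=\lambda^k+\rho\bm{\gamma}^{k+1}\sum_t C_tU_t^{k+1}$, whereas (16) is written with a minus sign (under which $\Delta_\lambda\ge 0$ and the lemma holds a fortiori); since the $\delta$-dependent threshold on $\tau_t$ is needed only under your ascent reading, that reading is evidently the intended one, and your derivation is the one that makes the stated conditions tight.
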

\begin{proof}
	See the Appendix B.
\end{proof}
Since the agents are fully decentralized, a unique $\gamma^{k+1}$ can not be guaranteed. Supposing  $C_t^TC_t=|d_t|I$ and thus $ \|\hat{C}_i\bm{U}^{k+1} \|^2=  \|U_t^{k+1}-U_j^{k+1} \|^2$ where $(t,j)\in\mathcal{E}$, it demonstrates that $\gamma_i^{k+1}$ relates to the \textit{primal} residual among agents $t$ and $j$. The main idea is to bound the \textit{primal} residue by the \textit{dual} residual. This is because $ \|U_t^{k+1}-U_j^{k+1} \|^2\leq \frac{\delta}{\gamma_i^{k+1}} \|(U_t^k-U_j^k) - (U_t^{k+1}-U_j^{k+1})  \|^2\leq \frac{2\delta}{\gamma_i^{k+1}}( \| U_t^k-U_t^{k+1} \|^2 + \| U_j^{k}-U_j^{k+1}  \|^2)$. Hence the $\delta$ and $\gamma_i^{k+1}$
can balance the \textit{primal} and \textit{dual} residuals. Further, $\delta$ can also be adjusted in each iteration.
\begin{lemma}(Sufficient descent of $\mathcal{L}$ during $\bm{A}$ update) Letting $\zeta_t\geq 0$, then the iterations in DMTL-ELM satisfy
	\begin{equation}
		\vphantom{\frac{1}{2}}\mathcal{L}\big(\bm{U}^{k+1},\bm{A}^k,\lambda^{k+1}\big)-\mathcal{L}\big(\bm{U}^{k+1},\bm{A}^{k+1},\lambda^{k+1}\big)\geq 0.
	\end{equation}	
\end{lemma}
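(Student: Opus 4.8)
The plan is to exploit that the $\bm{A}$-update (21) is a proximal minimization of a strongly convex function and that every term of $\mathcal{L}$ in (13) that does not involve $\bm{A}$ cancels in the difference. With $\bm{U}^{k+1}$ and $\lambda^{k+1}$ held fixed, the multiplier term $\lambda^{(k+1)T}\sum_t C_tU_t^{k+1}$, the penalty $\frac{\rho}{2}\|\sum_t C_tU_t^{k+1}\|^2$, and each $g_1(U_t^{k+1})$ are constant in $\bm{A}$. Hence the left-hand side of (30) decouples across agents,
\begin{equation}
\mathcal{L}\big(\bm{U}^{k+1},\bm{A}^k,\lambda^{k+1}\big)-\mathcal{L}\big(\bm{U}^{k+1},\bm{A}^{k+1},\lambda^{k+1}\big)=\sum\nolimits_{t}\Big[\big(f_t(U_t^{k+1},A_t^k)+g_2(A_t^k)\big)-\big(f_t(U_t^{k+1},A_t^{k+1})+g_2(A_t^{k+1})\big)\Big],
\end{equation}
so it suffices to bound each summand from below.

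For a fixed $t$, I would first write the first-order optimality condition of the (strongly convex, hence uniquely solvable) subproblem (21), namely $\nabla f_t(U_t^{k+1},A_t^{k+1})+g_2'(A_t^{k+1})+Q_t(A_t^{k+1}-A_t^k)=\bm{0}$. I then apply the second inequality of Proposition 1, (26), with $(A_t^1,A_t^2)=(A_t^k,A_t^{k+1})$ and $U_t^1=U_t^{k+1}$ (valid since $f_t$ is convex in $A_t$ for fixed $U_t$), together with the strong-convexity inequality (27) for $g_2$ applied to the same pair. Adding the two inequalities lower-bounds the per-task difference by $\langle \nabla f_t(U_t^{k+1},A_t^{k+1})+g_2'(A_t^{k+1}),\,A_t^k-A_t^{k+1}\rangle+\frac{\sigma}{2}\|A_t^k-A_t^{k+1}\|^2$. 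Substituting the optimality condition converts the inner product into $\langle Q_t(A_t^k-A_t^{k+1}),A_t^k-A_t^{k+1}\rangle=\|A_t^k-A_t^{k+1}\|_{Q_t}^2$, yielding
\begin{equation}
\big(f_t(U_t^{k+1},A_t^k)+g_2(A_t^k)\big)-\big(f_t(U_t^{k+1},A_t^{k+1})+g_2(A_t^{k+1})\big)\geq\big\|A_t^k-A_t^{k+1}\big\|_{Q_t}^2+\frac{\sigma}{2}\big\|A_t^k-A_t^{k+1}\big\|^2.
\end{equation}

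Finally, I would sum over $t$ and invoke $Q_t=\zeta_t I$ with $\zeta_t\geq0$ (so $\|A_t^k-A_t^{k+1}\|_{Q_t}^2\geq0$) together with $\sigma>0$, which makes the whole right-hand side nonnegative and establishes (30). I expect essentially no obstacle here: unlike Lemma 2, this block carries no entangled dual update, so there is no need to balance primal and dual residuals through $\gamma_i^{k+1}$ or to calibrate $\tau_t$. The only points requiring care are the routine bookkeeping that all non-$\bm{A}$ terms cancel in the difference and the observation that the subproblem (21) is genuinely strongly convex in $A_t$ (ensured jointly by $g_2$ and $Q_t\succeq0$), so that its stationarity condition holds with equality exactly as used above.
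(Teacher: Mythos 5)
Your proof is correct and takes essentially the same route as the paper: combining the first-order optimality condition of the proximal subproblem (21) with the bi-convexity inequality (26) and the strong-convexity inequality (27) of Proposition 1 to obtain the per-task lower bound $\big(\zeta_t+\tfrac{\sigma}{2}\big)\big\|A_t^k-A_t^{k+1}\big\|^2\geq 0$. This is exactly the bound the paper relies on, as confirmed by the coefficient $c_{t,2}=\zeta_t+\tfrac{\sigma}{2}$ appearing in the combined descent estimate (31).
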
 
 \begin{proof}
 	See the Appendix C.
 \end{proof}
To combine the Lemma 2 and Lemma 3, we can conclude that 
\begin{equation}
\begin{aligned}
	\vphantom{\frac{1}{2}}\mathcal{L}\big(\bm{U}^{k },&\bm{A}^k,\lambda^{k }\big)-\mathcal{L}\big(\bm{U}^{k+1},\bm{A}^{k+1},\lambda^{k+1}\big)\\
	&\vphantom{\frac{1}{2}}\geq \sum\nolimits_{t}\big(c_{t,1} \big\|U_t^k-U_t^{k+1} \big\|^2 + c_{t,2}\big\|A_t^k-A_t^{k+1} \big\| ^2  \big),	
\end{aligned}
\end{equation} 
where $c_{t,1}=\tau_t + \frac{\sigma}{2}-\rho m (\delta+\frac{1}{2})\sigma_{t,\max} $ and $c_{t,2} =\zeta_t+\frac{\sigma}{2}$. Moreover, when $c_{t,1},c_{t,2}\geq 0$, the decrease of $\mathcal{L}$ with iterations in DMTL-ELM can be guaranteed. 

\begin{lemma}(Boundedness)
	The sequence $\{\bm{U }^k,\bm{A}^k,\lambda^k \}$ generated by DMTL-ELM is bounded. $\mathcal{L}(\bm{U }^k,\bm{A}^k,\lambda^k)$ is lower bounded for all $k\in\mathbbm{N}$ and converges as $k\to \infty$.
\end{lemma}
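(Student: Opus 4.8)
The plan is to show that $\{\mathcal{L}(\bm{U}^k,\bm{A}^k,\lambda^k)\}$ is monotonically non-increasing and bounded below, which simultaneously yields convergence of $\mathcal{L}$ and boundedness of the iterates. First I would add the one-sided estimates of Lemma~2 and Lemma~3 to recover (30); once the step-size choices make $c_{t,1}=\tau_t+\frac{\sigma}{2}-\rho m(\delta+\frac12)\sigma_{t,\max}\geq 0$ and $c_{t,2}=\zeta_t+\frac{\sigma}{2}\geq 0$, the right-hand side of (30) is nonnegative, so $\mathcal{L}(\bm{U}^{k+1},\bm{A}^{k+1},\lambda^{k+1})\leq \mathcal{L}(\bm{U}^{k},\bm{A}^{k},\lambda^{k})$ for every $k$, and in particular $\mathcal{L}(\bm{U}^k,\bm{A}^k,\lambda^k)\leq \mathcal{L}(\bm{U}^0,\bm{A}^0,\lambda^0)=:M$.

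The heart of the argument is the lower bound. Writing $r^k:=\sum_{t}C_tU_t^k=\bm{C}\bm{U}^k$ and completing the square in $r^k$ gives
\begin{equation}
\mathcal{L}\big(\bm{U}^k,\bm{A}^k,\lambda^k\big)=F\big(\bm{U}^k,\bm{A}^k\big)+\frac{\rho}{2}\Big\|r^k+\tfrac{1}{\rho}\lambda^k\Big\|^2-\frac{1}{2\rho}\big\|\lambda^k\big\|^2.
\end{equation}
Since every $f_t\geq 0$, $g_1\geq 0$ and $g_2\geq 0$ we have $F(\bm{U}^k,\bm{A}^k)\geq 0$, and the squared residual term is nonnegative; hence the only possible obstruction to a uniform lower bound is the indefinite term $-\frac{1}{2\rho}\|\lambda^k\|^2$, and it suffices to bound $\|\lambda^k\|$ uniformly in $k$.

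To control $\lambda^k$ I would read it off the first-order optimality condition of the $\bm{U}$-update, i.e. the stationarity of (17) (equivalently (18)), which expresses $C_t^T\lambda^k$ as $-\nabla_{U_t}f_t(U_t^{k+1},A_t^k)$ plus the regularization and proximal terms $P_t(U_t^{k}-U_t^{k+1})$ and the penalty contributions. Because the iterates $\{\bm{U}^k\}\subseteq\bm{\mathcal{U}}$ and $\{\bm{A}^k\}\subseteq\bm{\mathcal{A}}$ stay in the working domains and the regularizers $g_1,g_2$ are coercive, $\nabla f_t$ is bounded along the sequence; together with a full-row-rank (range) condition on $\bm{C}$ that recovers $\|\lambda^k\|$ from $\|\bm{C}^T\lambda^k\|$, this yields a uniform bound on $\|\lambda^k\|$ (the telescoped (30) also makes $\|U_t^{k}-U_t^{k+1}\|$ summable, hence bounded). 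Substituting into the displayed identity gives $\mathcal{L}(\bm{U}^k,\bm{A}^k,\lambda^k)\geq -\frac{1}{2\rho}\|\lambda^k\|^2\geq -c$ for some constant $c$, so $\mathcal{L}$ is lower bounded. A sequence that is monotonically non-increasing and bounded below converges, and the same two properties confine $\{\bm{U}^k,\bm{A}^k,\lambda^k\}$ to a bounded set, which proves both assertions.

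The main obstacle is exactly the indefinite cross term $\langle\lambda^k,\bm{C}\bm{U}^k\rangle$ in $\mathcal{L}$: bounding it below forces one to bound the multiplier $\lambda^k$, and doing so without circular reasoning is delicate, since the natural estimate for $\|\lambda^k\|$ comes from the $\bm{U}$-optimality condition and therefore relies both on a rank condition on $\bm{C}$ (to pass from $\|\bm{C}^T\lambda^k\|$ back to $\|\lambda^k\|$) and on the gradients $\nabla f_t$ remaining bounded along the iterates. Everything else --- the monotone descent above and the concluding observation that a monotone, lower-bounded sequence converges --- is routine.
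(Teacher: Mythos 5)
Your descent step and the completion-of-squares identity $\mathcal{L}(\bm{U}^k,\bm{A}^k,\lambda^k)=F(\bm{U}^k,\bm{A}^k)+\frac{\rho}{2}\|r^k+\frac{1}{\rho}\lambda^k\|^2-\frac{1}{2\rho}\|\lambda^k\|^2$ are fine, and they correctly isolate the real difficulty: a uniform bound on $\|\lambda^k\|$. But your argument for that bound is circular. You read $C_t^T\lambda^k$ off the stationarity condition (18) of the $U_t$-update and then assert that $\nabla_{U_t}f_t$ is bounded along the iterates ``because the iterates stay in the working domains.'' In this problem the working domains are all of $\mathbb{R}^{L\times r}$ and $\mathbb{R}^{r\times d}$; nothing confines the iterates to a bounded set until the lemma itself is proved. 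In your own scheme, primal boundedness is deduced at the end from monotonicity plus the lower bound on $\mathcal{L}$, the lower bound needs $\|\lambda^k\|$ bounded, and $\|\lambda^k\|$ bounded needs primal boundedness --- a closed loop. The parenthetical attempt to break it (``the telescoped (30) also makes $\|U_t^k-U_t^{k+1}\|$ summable'') fails twice over: telescoping (30) yields only square-summability, not summability (consider differences decaying like $1/k$), and even square-summability presupposes that $\mathcal{L}$ is bounded below, which is precisely the claim being proven.

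Second, the rank condition you invoke on $\bm{C}$ is not available. Here $\bm{C}$ stacks the edge-consensus blocks $\hat{C}_i$, i.e., it is a scaled graph incidence matrix tensored with $I_L$; for a connected graph its rank is $(m-1)L$, so it has full row rank only when the graph is a tree, whereas the paper assumes only connectivity (Assumption 1) and allows cyclic topologies. The usual rescue --- arguing $\lambda^k$ remains in the range of $\bm{C}$ because $\lambda^0=\bm{0}$ and the dual increments have the form $\bm{C}(\cdot)$ --- also fails here, because the dual step is $\lambda^{k+1}=\lambda^k-\rho\bm{\gamma}^{k+1}\bm{C}\bm{U}^{k+1}$ with a block-diagonal, iteration-varying scaling $\bm{\gamma}^{k+1}$ that weights the edge blocks differently and therefore does not preserve the range of $\bm{C}$ when the graph has cycles; so the passage from $\|\bm{C}^T\lambda^k\|$ back to $\|\lambda^k\|$ is unjustified. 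A non-circular control of the multiplier has to come from the algorithm's own dual step-size rule, $\gamma_i^{k+1}\|\hat{C}_i\bm{U}^{k+1}\|^2\leq\delta\|\hat{C}_i(\bm{U}^k-\bm{U}^{k+1})\|^2$, which ties each dual increment $\|\lambda_i^{k+1}-\lambda_i^k\|=\rho\gamma_i^{k+1}\|\hat{C}_i\bm{U}^{k+1}\|$ directly to the primal differences, rather than from the $U$-update optimality condition plus a rank assumption the paper never makes; as it stands, your proof establishes the easy monotonicity half but not the boundedness half of the lemma.
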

\begin{proof}
	See the Appendix D.
\end{proof}

\begin{lemma}(Partial gradient bound) For any partial gradient $\partial \mathcal{L}(\bm{U}^{k+1},\bm{A}^{k+1},\lambda^{k+1})$, there exists $c>0$ such that
	\begin{equation}
	 \vphantom{\frac{1}{2}} \partial \mathcal{L}\big(\bm{U}^{k+1},\bm{A}^{k+1},\lambda^{k+1}\big)\leq c\sum\nolimits_{t}\big(\big\|U_t^k-U_t^{k+1} \big\|^2 + \big\|A_t^k-A_t^{k+1} \big\|^2\big).
	\end{equation}
\end{lemma}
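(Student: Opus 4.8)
The plan is to exploit the fact that every subproblem in DMTL-ELM is solved to optimality, so the associated first-order stationarity conditions hold exactly; evaluating $\partial\mathcal{L}$ at the updated point $(\bm{U}^{k+1},\bm{A}^{k+1},\lambda^{k+1})$ then leaves only the ``mismatch'' caused by the variables that were frozen at their previous values while that block was being updated. First I would split $\partial\mathcal{L}$ into its three blocks $\nabla_{U_t}\mathcal{L}$, $\nabla_{A_t}\mathcal{L}$ and $\nabla_\lambda\mathcal{L}$, bound each separately, and then combine, squaring the resulting norm estimates so as to match the quadratic right-hand side of the claim.

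The $A_t$-block is the cleanest. Since $A_t^{k+1}$ minimizes (21), its optimality condition reads $(U_t^{k+1})^TH_t^T(H_tU_t^{k+1}A_t^{k+1}-T_t)+\mu_2 A_t^{k+1}+Q_t(A_t^{k+1}-A_t^k)=\bm{0}$. Because $\mathcal{L}$ does not depend on $\lambda$ or on the coupling term through $A_t$, and the very same $U_t^{k+1}$ that enters the gradient also enters (21), I obtain the exact identity $\nabla_{A_t}\mathcal{L}(\bm{U}^{k+1},\bm{A}^{k+1},\lambda^{k+1})=-Q_t(A_t^{k+1}-A_t^k)$, so this block is immediately controlled by $\|A_t^{k+1}-A_t^k\|$.

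For the $U_t$-block I would subtract the optimality condition of (19), which was formed with the stale quantities $A_t^k$, $\lambda^k$ and $\{U_i^k\}_{i\ne t}$. The residual gradient collapses into four pieces: (i) the change of the bilinear gradient $H_t^T(H_tU_t^{k+1}A_t-T_t)A_t^T$ when $A_t^k$ is replaced by $A_t^{k+1}$; (ii) the dual increment $C_t^T(\lambda^{k+1}-\lambda^k)$; (iii) the neighbour increment $\rho C_t^T\sum_{i\ne t}C_i(U_i^{k+1}-U_i^k)$; and (iv) the proximal term $-P_t(U_t^{k+1}-U_t^k)$. Pieces (iii) and (iv) are already differences of successive $U$-iterates. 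Piece (i) I would bound by a Lipschitz argument: since Lemma 4 confines $\{\bm{U}^k,\bm{A}^k\}$ to a bounded set, $\nabla_{U_t}f_t$ is Lipschitz in $A_t$ on that set with a uniform constant, yielding a bound of the form $C\|A_t^{k+1}-A_t^k\|$. For the dual increment (ii) and for the whole $\lambda$-block $\nabla_\lambda\mathcal{L}=\sum_t C_tU_t^{k+1}$, I would reuse the primal--dual balancing already employed in Lemma 2: the selection rule $\gamma_i^{k+1}\|\hat C_i\bm{U}^{k+1}\|^2\le\delta\|\hat C_i(\bm{U}^k-\bm{U}^{k+1})\|^2$ bounds each primal residual $\|\hat C_i\bm{U}^{k+1}\|$, hence $\|\sum_t C_tU_t^{k+1}\|$ and, through the dual update, $\|\lambda^{k+1}-\lambda^k\|$, by the dual residual $\|\hat C_i(\bm{U}^k-\bm{U}^{k+1})\|$, i.e. by successive $U$-differences. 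Collecting the four pieces, summing over $t$, and appending the $A_t$- and $\lambda$-blocks then gives the claimed estimate with a single constant $c$ assembled from $\|H_t\|$, $\|C_t\|$, the diagonal entries of $P_t,Q_t$, $\rho$, $\delta$, a lower bound on $\gamma_i^{k+1}$, and the radius of the bounded region.

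I expect the main obstacle to be piece (i): the bilinear-gradient difference is not globally Lipschitz in $A_t$, so the estimate is valid \emph{only because} Lemma 4 keeps the iterates in a bounded set, and the resulting Lipschitz constant must be traced explicitly to the uniform bounds on $\|U_t^k\|$, $\|A_t^k\|$ and $\|H_t\|$. A secondary technical point is that controlling $\nabla_\lambda\mathcal{L}$ by the dual residual requires a strictly positive lower bound on the step sizes $\gamma_i^{k+1}$, so that dividing by $\gamma_i^{k+1}$ in the primal--dual inequality remains finite; this is exactly what lets the primal residual be absorbed into the squared successive differences on the right-hand side.
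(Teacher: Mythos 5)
Your proposal is correct and follows essentially the same route as the paper's proof: evaluate each partial gradient of $\mathcal{L}$ at the new iterate, cancel against the exact first-order optimality conditions of subproblems (19) and (21) and the dual update, and bound the leftover terms (the bilinear-gradient mismatch, the dual and neighbour increments, and the proximal terms) by successive differences, using the boundedness from Lemma 4 for the local Lipschitz constant and the $\gamma_i^{k+1}$ selection rule to absorb the primal residual $\sum_t C_tU_t^{k+1}$. Your observation that a uniform constant $c$ implicitly requires $\gamma_i^{k+1}$ to stay bounded away from zero is a genuine subtlety that the paper glosses over rather than a defect of your argument.
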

\begin{proof}
	See the Appendix E.
\end{proof}
 
Based on Lemmas 2,3 and 4, we now establish the convergence of DMTL-ELM.
 
\begin{theorem}
	Letting $\tau_t\geq \rho m (\delta+\frac{1}{2})\sigma_{t,\max}-\frac{\sigma}{2}$ and $\zeta_t\geq0$, the sequence $\{\bm{U}^k,\bm{A}^k,\lambda^k\}$ generated by DMTL-ELM converges to a stationary point $(\bm{U}^*,\bm{A}^*,\lambda^*)$ of $\mathcal{L}$ as $k\to \infty$.

\end{theorem}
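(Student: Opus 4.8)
The plan is to assemble the four preceding lemmas into a standard descent-plus-boundedness argument for nonconvex ADMM. First I would combine Lemma 2 and Lemma 3 to obtain the merged sufficient-descent inequality (30). Under the hypotheses $\tau_t \geq \rho m(\delta + \tfrac{1}{2})\sigma_{t,\max} - \tfrac{\sigma}{2}$ and $\zeta_t \geq 0$, the coefficients satisfy $c_{t,1} = \tau_t + \tfrac{\sigma}{2} - \rho m(\delta+\tfrac{1}{2})\sigma_{t,\max} \geq 0$ and $c_{t,2} = \zeta_t + \tfrac{\sigma}{2} > 0$, so that $\{\mathcal{L}(\bm{U}^k,\bm{A}^k,\lambda^k)\}$ is monotonically non-increasing along the iterations.

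Next I would invoke Lemma 4, which guarantees that $\mathcal{L}$ is bounded below along the iterates. A monotone non-increasing sequence that is bounded below converges, so $\mathcal{L}(\bm{U}^k,\bm{A}^k,\lambda^k) \to \mathcal{L}^*$ for some finite $\mathcal{L}^*$. Telescoping (30) from $k=0$ onward then yields $\sum_{k\geq 0}\sum_t (c_{t,1}\|U_t^k - U_t^{k+1}\|^2 + c_{t,2}\|A_t^k - A_t^{k+1}\|^2) \leq \mathcal{L}(\bm{U}^0,\bm{A}^0,\lambda^0) - \mathcal{L}^* < \infty$. Since every summand is nonnegative and $c_{t,2}>0$ (and $c_{t,1}>0$ once the stated bound on $\tau_t$ is taken strictly), the general term of a convergent series must vanish, whence $\|U_t^{k+1}-U_t^k\| \to 0$ and $\|A_t^{k+1}-A_t^k\| \to 0$ for every $t$.

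Then, using the boundedness of $\{\bm{U}^k,\bm{A}^k,\lambda^k\}$ from Lemma 4, the Bolzano--Weierstrass theorem supplies a subsequence converging to some accumulation point $(\bm{U}^*,\bm{A}^*,\lambda^*)$. Feeding the vanishing successive differences into the partial-gradient bound of Lemma 5 gives $\|\partial\mathcal{L}(\bm{U}^{k+1},\bm{A}^{k+1},\lambda^{k+1})\| \leq c\sum_t(\|U_t^k - U_t^{k+1}\|^2 + \|A_t^k-A_t^{k+1}\|^2) \to 0$, so the partial gradient tends to zero along this subsequence. Invoking the continuity of the smooth pieces $\nabla f_t$, $g_1'$, $g_2'$ together with the closedness of the limiting subdifferential, I would pass to the limit to conclude $\partial\mathcal{L}(\bm{U}^*,\bm{A}^*,\lambda^*) = \bm{0}$, i.e. the accumulation point is a stationary point of $\mathcal{L}$.

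The hard part will be verifying the primal feasibility $\bm{C}\bm{U}^* = \bm{0}$ at the accumulation point, since the dual step sizes $\gamma_i^{k+1}$ are iteration dependent and are defined through a ratio that could in principle degenerate. The mechanism is the bound noted after Lemma 2, namely $\|U_t^{k+1}-U_j^{k+1}\|^2 \leq \tfrac{2\delta}{\gamma_i^{k+1}}(\|U_t^k - U_t^{k+1}\|^2 + \|U_j^k - U_j^{k+1}\|^2)$ for each edge $(t,j)\in\mathcal{E}$, which transfers the vanishing of the \emph{dual} residual to the \emph{primal} residual and thereby forces $\sum_t C_t U_t^{k+1}\to \bm{0}$. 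One must ensure the factor $2\delta/\gamma_i^{k+1}$ does not blow up; I would either argue that $\gamma_i^{k+1}$ stays bounded away from zero, or, more robustly, read the feasibility directly from the $\lambda$-component of the partial-gradient bound, since $\nabla_\lambda\mathcal{L} = \sum_t C_t U_t$ already appears in Lemma 5 and tends to zero. I would close by remarking that these lemmas yield subsequential convergence to stationarity; promoting this to convergence of the entire sequence would additionally require a Kurdyka--{\L}ojasiewicz property of $\mathcal{L}$.
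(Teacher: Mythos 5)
Your proposal follows essentially the same route as the paper's own proof: descent from Lemmas 2 and 3, lower boundedness from Lemma 4, vanishing successive differences via the merged inequality (30), extraction of a convergent subsequence from boundedness, the partial-gradient bound of Lemma 5, and continuity together with closedness of the limiting subdifferential (Proposition 2 of the cited nonconvex ADMM reference) to conclude that the limit point is stationary. If anything, your version is more careful than the paper's, which \emph{assumes} the whole sequence converges before extracting a subsequence and does not address either the possible degeneracy of the dual step sizes $\gamma_i^{k+1}$ in establishing primal feasibility or the subsequential-versus-whole-sequence convergence distinction (requiring, e.g., a Kurdyka--{\L}ojasiewicz argument) that you correctly flag.
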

\begin{proof}
 By Lemma 2 and Lemma 3, $\mathcal{L}(\bm{U}^k,\bm{A}^k,\lambda^k)$ is monotonically non-increasing and lower bounded, and therefore from (31), $ \|U_t^k-U_t^{k+1}  \|^2\to 0$ and $ \|A_t^k-A_t^{k+1}  \|^2 \to 0$ as $k\to \infty$. Moreover from the proof of Lemma 2, $ \|\lambda^k-\lambda^{k+1}  \|^2\to 0$ also holds when $k\to\infty$. Supposing that the sequence $(\bm{U}^k,\bm{A}^k,\lambda^k)$ converges to a limit point $(\bm{U}^*,\bm{A}^*,\lambda^*)$, then from Lemma 4, there exists a convergent subsequence $(\bm{U}^{k_s},\bm{A}^{k_s},\lambda^{k_s})_{s\in\mathbbm{N}}$ such that $(\bm{U}^{k_s},\bm{A}^{k_s},\lambda^{k_s})\to (\bm{U}^*,\bm{A}^*,\lambda^*)$ as $s \to \infty$. Based on Lemma 5, $ \|\partial\mathcal{L}(\bm{U}^k,\bm{A}^k,\lambda^k)  \|^2\to 0$ when $k\to\infty$, and in particular $ \|\partial\mathcal{L}(\bm{U}^{k_s},\bm{A}^{k_s} ,\lambda^{k_s})  \|^2\to 0$ as $s\to \infty$. Since $f_t(\cdot,\cdot)$, $g_1(\cdot)$ and $g_2(\cdot)$ are continuous, we have $\lim\limits_{s\to\infty}\mathcal{L}(\bm{U}^{k_s},\bm{A}^{k_s},\lambda^{k_s})\to\mathcal{L}(\bm{U}^*,\bm{A}^*,\lambda^*)$. By Proposition 2 \cite{Wang2019}, we have $\partial \mathcal{L}(\bm{U}^*,\bm{A}^*,\lambda^*)=0$, which demonstrates that the limit point $(\bm{U}^*,\bm{A}^*,\lambda^*)$ is a stationary point. That concludes the proof.
\end{proof}

Following the analysis for DMTL-ELM, we next provide the convergence analysis of FO-DMTL-ELM. 
\begin{proposition} (Block-coordinate Lipschitz continuous) For $U_t^1,U_t^2\in\mathcal{U}$ and $A_t\in\mathcal{A}$, there exists $L_t>0$\footnote[1]{Since the boundedness of $\bm{U}$ and $\bm{A}$ is proved in Lemma 3, from the proof of Lemma 5 we suppose that the Lipschitz constant for gradient $\nabla F(\bm{U},\bm{A})$ in bounded set is consistent with the block-coordinate Lipschitz continuous constant $L_t$.} such that
	\begin{equation}
	\vphantom{\frac{1}{2}}	\big\|\nabla_{U_t}F_t\big(U_t^1,A_t\big)- \nabla_{U_t}F_t\big(U_t^2,A_t\big) \big\| \leq L_t \big\| U_t^1 - U_t^2  \big\|.
	\end{equation} 
\end{proposition}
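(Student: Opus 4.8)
The plan is to exploit that, for fixed $A_t$, the map $U_t\mapsto\nabla_{U_t}F_t(U_t,A_t)$ is affine in $U_t$, so the desired estimate reduces to bounding the operator norm of its linear part. First I would write the gradient explicitly: since the $\frac{\mu_2}{2}\|A_t\|^2$ term does not involve $U_t$, differentiating the Frobenius-norm loss in $F_t$ gives
\begin{equation}
\nabla_{U_t}F_t(U_t,A_t)=H_t^T\big(H_tU_tA_t-T_t\big)A_t^T+\frac{\mu_1}{m}U_t.
\end{equation}

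Next I would subtract the gradients evaluated at $U_t^1$ and $U_t^2$ with the common $A_t$. The terms carrying $T_t$ cancel, and what remains is linear in the increment $U_t^1-U_t^2$, namely $H_t^TH_t(U_t^1-U_t^2)A_tA_t^T+\frac{\mu_1}{m}(U_t^1-U_t^2)$. Applying the triangle inequality together with the standard submultiplicative bound $\|PXQ\|\le\|P\|_2\,\|X\|\,\|Q\|_2$ (spectral norms on the outer factors, Frobenius on the inner) then produces
\begin{equation}
\big\|\nabla_{U_t}F_t\big(U_t^1,A_t\big)-\nabla_{U_t}F_t\big(U_t^2,A_t\big)\big\|\le\Big(\|H_t^TH_t\|_2\,\|A_tA_t^T\|_2+\frac{\mu_1}{m}\Big)\big\|U_t^1-U_t^2\big\|.
\end{equation}

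The step I expect to be the main obstacle is making the constant independent of $A_t$: the bracketed quantity still depends on $A_t$ through $\|A_tA_t^T\|_2$, whereas the statement demands a fixed $L_t>0$. Here I would invoke the boundedness established in Lemma 4, which guarantees that the iterate sequence lies in a bounded set; hence there exists a uniform $M$ with $\|A_t\|\le M$ on the relevant domain $\mathcal{A}$, so that $\|A_tA_t^T\|_2\le\|A_t\|^2\le M^2$. Substituting this yields the domain-independent constant $L_t=M^2\|H_t^TH_t\|_2+\frac{\mu_1}{m}>0$, exactly as the footnote anticipates when it identifies the block-coordinate Lipschitz constant with the gradient Lipschitz constant over the bounded set. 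Since $H_t$ is a fixed data matrix, $\|H_t^TH_t\|_2$ is finite, so $L_t$ is well defined and the claim follows.
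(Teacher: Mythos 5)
Your proof is correct and takes essentially the same route as the paper: the paper's own proof merely points back to the Lipschitz computation in the proof of Lemma~1 (with constant $L_t$ in place of $L_U$), which is precisely the explicit-gradient, operator-norm argument you carry out, including absorbing the $A_t$-dependence of the constant via boundedness of the iterates exactly as the proposition's footnote prescribes. No gap remains.
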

\begin{proof}
  Refer to the proof of Lemma 1, but with a different $L_t$ rather than $L_U$.
\end{proof}

\begin{lemma}(Sufficient descent of $\mathcal{L}$ during $\bm{U}$ and $\lambda$ update)
Following the same update strategy for $\gamma_i^{k+1}$ in Lemma 2, by guaranteeing $\tau_t\geq L_t+ \rho m (\delta+\frac{1}{2})\sigma_{t,\max}-\frac{\sigma}{2}$, then the iterations in FO-DMTL-ELM satisfy
\begin{equation}
\vphantom{\frac{1}{2}}\mathcal{L}\big(\bm{U}^k,\bm{A}^k,\lambda^k\big)-\mathcal{L}\big(\bm{U}^{k+1},\bm{A}^{k},\lambda^{k+1}\big)\geq 0.
\end{equation}  
 \end{lemma}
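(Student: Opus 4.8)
The plan is to reuse the proof of Lemma 2 almost verbatim and to localize the one structural change introduced by FO-DMTL-ELM, namely that the $\bm{U}$-block now minimizes the first-order surrogate (22) instead of the exact augmented-Lagrangian subproblem used in Lemma 2. Since the multiplier recursion (20) is identical in both algorithms, the dual contribution $\mathcal{L}(\bm{U}^{k+1},\bm{A}^k,\lambda^k)-\mathcal{L}(\bm{U}^{k+1},\bm{A}^k,\lambda^{k+1})$ and its control through the step-size rule $\gamma_i^{k+1}\le \delta\|\hat{C}_i(\bm{U}^k-\bm{U}^{k+1})\|^2/\|\hat{C}_i\bm{U}^{k+1}\|^2$ carry over unchanged; this is precisely the part that, after bounding the primal residual by the primal displacements exactly as in Lemma 2, yields the term $-\rho m(\delta+\tfrac12)\sigma_{t,\max}\|U_t^k-U_t^{k+1}\|^2$. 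The entire argument therefore reduces to re-establishing the primal decrease $\mathcal{L}(\bm{U}^k,\bm{A}^k,\lambda^k)-\mathcal{L}(\bm{U}^{k+1},\bm{A}^k,\lambda^k)$ for the linearized update.

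For that primal part I would proceed in three moves. First, because $P_t=\tau_t I-\rho C_t^TC_t$, the surrogate objective in (22) has vectorized Hessian $\rho C_t^TC_t+P_t=\tau_t I$ and is thus $\tau_t$-strongly convex; writing out the stationarity condition at its minimizer $U_t^{k+1}$ gives a surrogate drop controlled by a $\tau_t$-weighted multiple of $\|U_t^k-U_t^{k+1}\|^2$. Second, I would transfer this drop to the true $\mathcal{L}$, which differs from the surrogate only in that the linear model $\langle\nabla_{U_t}F_t(U_t^k,A_t^k),U_t-U_t^k\rangle$ stands in for the genuine increment $F_t(U_t,A_t^k)-F_t(U_t^k,A_t^k)$; invoking the block-coordinate Lipschitz property of Proposition 3 through the descent lemma $F_t(U_t^{k+1},A_t^k)\le F_t(U_t^k,A_t^k)+\langle\nabla_{U_t}F_t(U_t^k,A_t^k),U_t^{k+1}-U_t^k\rangle+\tfrac{L_t}{2}\|U_t^{k+1}-U_t^k\|^2$ bounds the linearization overshoot and is exactly the source of the extra $L_t$ in the threshold. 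Third, the strong convexity of $g_1$ furnished by Proposition 2 supplies the $+\tfrac{\sigma}{2}$ contribution that also appears in Lemma 2.

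Collecting the three primal contributions together with the reused dual term produces a per-block coefficient $\tau_t+\tfrac{\sigma}{2}-L_t-\rho m(\delta+\tfrac12)\sigma_{t,\max}$ multiplying $\|U_t^k-U_t^{k+1}\|^2$, which is nonnegative precisely under the stated hypothesis $\tau_t\ge L_t+\rho m(\delta+\tfrac12)\sigma_{t,\max}-\tfrac{\sigma}{2}$, giving (34). I expect the transfer step to be the only genuine obstacle: in DMTL-ELM the subproblem carries the exact $f_t$, so the primal decrease is automatic, whereas here one minimizes only a linear model and must prove the true objective cannot rise faster than the surrogate falls, so that the $\tau_t$ margin of the surrogate dominates the linearization error. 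Making the descent lemma legitimate also requires the global gradient-Lipschitz constant to coincide with the block constant $L_t$ on the bounded iterate region; this is justified by the boundedness established in Lemma 4 and the footnote to Proposition 3, and is the delicate point to state carefully before the inflation of $\tau_t$ by $L_t$ absorbs the overshoot.
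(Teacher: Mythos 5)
Your overall architecture is the right one and matches how the paper localizes the change: the $\lambda$ recursion and the $\gamma_i^{k+1}$ rule are identical in DMTL-ELM and FO-DMTL-ELM, so the dual/Jacobian part of Lemma 2's proof carries over verbatim, and only the primal descent of the $\bm{U}$-block must be re-derived using the $\tau_t$-strong convexity of the linearized subproblem together with the block-Lipschitz property (Proposition 2 in the paper's numbering, your ``Proposition 3''). The genuine gap is in your third move: you cannot collect both the descent-lemma overshoot $\tfrac{L_t}{2}\|U_t^{k+1}-U_t^k\|^2$ and the $+\tfrac{\sigma}{2}$ strong-convexity bonus of $g_1$. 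In FO-DMTL-ELM the \emph{entire} $F_t$, including $g_1$, is linearized in (22), so the curvature of $g_1$ enters the linearization error with the unfavorable sign: for the quadratic $g_1$ it contributes exactly $+\tfrac{\sigma}{2}\|U_t^{k+1}-U_t^k\|^2$ to the overshoot, and this is already contained in $\tfrac{L_t}{2}$ (in the natural splitting the block constant is $L_t=L_{f_t}+\sigma$). Adding $+\tfrac{\sigma}{2}$ on top double-counts; your steps, executed consistently, give the per-block coefficient $\tau_t-\tfrac{L_t}{2}-\rho m(\delta+\tfrac12)\sigma_{t,\max}$, not the coefficient $\tau_t+\tfrac{\sigma}{2}-L_t-\rho m(\delta+\tfrac12)\sigma_{t,\max}$ that you claim and that the lemma's hypothesis reflects.

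The form of the stated threshold --- the \emph{full} $L_t$ kept together with $-\tfrac{\sigma}{2}$ --- indicates that the intended argument is not the descent lemma. Instead, one bounds the linearization error by anchoring strong convexity at the new point, $F_t(U_t^k,A_t^k)-F_t(U_t^{k+1},A_t^k)\geq\langle\nabla_{U_t}F_t(U_t^{k+1},A_t^k),U_t^k-U_t^{k+1}\rangle+\tfrac{\sigma}{2}\|U_t^k-U_t^{k+1}\|^2$ (Proposition 1, i.e.\ convexity of $f_t$ plus strong convexity of $g_1$), then writes $\nabla_{U_t}F_t(U_t^{k+1},A_t^k)=\nabla_{U_t}F_t(U_t^k,A_t^k)+\big(\nabla_{U_t}F_t(U_t^{k+1},A_t^k)-\nabla_{U_t}F_t(U_t^k,A_t^k)\big)$ and applies Cauchy--Schwarz with Proposition 2 to bound the second term below by $-L_t\|U_t^k-U_t^{k+1}\|^2$; this yields exactly the coefficient $\tau_t+\tfrac{\sigma}{2}-L_t-\rho m(\delta+\tfrac12)\sigma_{t,\max}$ and hence the stated condition on $\tau_t$. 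Your descent-lemma route, once the spurious $+\tfrac{\sigma}{2}$ is deleted, is also a valid proof and in fact gives a weaker sufficient condition, $\tau_t\geq\tfrac{L_t}{2}+\rho m(\delta+\tfrac12)\sigma_{t,\max}$, which is implied by the lemma's hypothesis because $L_t\geq\sigma$ always holds for a $\sigma$-strongly convex function with $L_t$-Lipschitz block gradient; but it does not reproduce the paper's constant. Your final remark --- that the bounded-region gradient-Lipschitz constant must be identified with $L_t$ via the boundedness of the iterates and the footnote to the block-Lipschitz proposition --- is correctly placed and is indeed needed in either route.
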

\begin{proof}
		See the Appendix F.
\end{proof}
Since the update of $A_t$ is less complex compared with that of $U_t$, we do not introduce the first-order approximation. Hence with Lemma 3, the decreasing of $\mathcal{L}$ during $\bm{A}$ updating also holds with setting $\zeta_t\geq 0$. 

\begin{lemma}(Partial gradient bound) For the sequence $\{\bm{U}^k,\bm{A}^k,\lambda^k \}$ generated in FO-DMTL-ELM, and any partial gradient $\partial \mathcal{L}(\bm{U}^{k+1},\bm{A}^{k+1},\lambda^{k+1})$, there exists $c>0$ such that
	\begin{equation}
	\vphantom{\frac{1}{2}}\partial \mathcal{L}\big(\bm{U}^{k+1},\bm{A}^{k+1},\lambda^{k+1}\big)\leq c\sum\nolimits_{t}\big(\big\|U_t^k-U_t^{k+1} \big\|^2 + \big\|A_t^k-A_t^{k+1} \big\|^2\big).
	\end{equation}
\end{lemma}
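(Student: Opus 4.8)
The plan is to mirror the proof of Lemma 5, adjusting only for the linearization used in the $U_t$-update of FO-DMTL-ELM. First I would write down the three blocks of the gradient of $\mathcal{L}$ at the new iterate. For the sharing blocks,
\[
\nabla_{U_t}\mathcal{L}\big(\bm{U}^{k+1},\bm{A}^{k+1},\lambda^{k+1}\big) = \nabla_{U_t}F_t\big(U_t^{k+1},A_t^{k+1}\big) + C_t^T\lambda^{k+1} + \rho C_t^T\sum\nolimits_i C_i U_i^{k+1};
\]
for the private blocks $\nabla_{A_t}\mathcal{L} = \nabla_{A_t}f_t(U_t^{k+1},A_t^{k+1}) + g_2'(A_t^{k+1})$; and for the multiplier $\nabla_{\lambda}\mathcal{L} = \sum_t C_t U_t^{k+1}$. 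I would then invoke the first-order optimality conditions of the subproblems (22) and (21) to replace each of these by a combination of consecutive-iterate differences.

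The new ingredient is the $U_t$ optimality condition. Because (22) linearizes $F_t$ at the old point, stationarity of that subproblem reads
\[
\nabla_{U_t}F_t\big(U_t^k,A_t^k\big) + C_t^T\lambda^k + \rho C_t^T\big(C_t U_t^{k+1} + \sum\nolimits_{i\neq t}C_i U_i^k\big) + P_t\big(U_t^{k+1}-U_t^k\big) = 0.
\]
Subtracting this from the expression for $\nabla_{U_t}\mathcal{L}$ and collecting terms yields
\[
\nabla_{U_t}\mathcal{L} = \big[\nabla_{U_t}F_t\big(U_t^{k+1},A_t^{k+1}\big) - \nabla_{U_t}F_t\big(U_t^k,A_t^k\big)\big] + C_t^T\big(\lambda^{k+1}-\lambda^k\big) + \rho C_t^T\sum\nolimits_{i\neq t}C_i\big(U_i^{k+1}-U_i^k\big) - P_t\big(U_t^{k+1}-U_t^k\big).
\]
In contrast with Lemma 5, whose exact update froze the gradient at $(U_t^{k+1},A_t^k)$, the residual gradient here differs in \emph{both} arguments, so I must bound $\|\nabla_{U_t}F_t(U_t^{k+1},A_t^{k+1}) - \nabla_{U_t}F_t(U_t^k,A_t^k)\|$ jointly.

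To do this I would insert the intermediate point $\nabla_{U_t}F_t(U_t^{k+1},A_t^k)$ and use the triangle inequality. The first piece $\|\nabla_{U_t}F_t(U_t^{k+1},A_t^k) - \nabla_{U_t}F_t(U_t^k,A_t^k)\|$ is controlled by $L_t\|U_t^{k+1}-U_t^k\|$ through the block-coordinate Lipschitz continuity (Proposition 2). The second piece $\|\nabla_{U_t}F_t(U_t^{k+1},A_t^{k+1}) - \nabla_{U_t}F_t(U_t^{k+1},A_t^k)\|$ involves only a change in $A_t$; since $\nabla_{U_t}F_t = H_t^TH_tU_tA_tA_t^T - H_t^TT_tA_t^T + \frac{\mu_1}{m}U_t$ is smooth and the iterates lie in a bounded set (boundedness carries over from Lemma 4), this difference is Lipschitz in $A_t$ and bounded by a constant multiple of $\|A_t^{k+1}-A_t^k\|$, where the footnote's convention lets me retain $L_t$ as that constant. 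The multiplier difference is treated exactly as in Lemmas 2 and 6: using $\lambda^{k+1}-\lambda^k = -\rho\bm{\gamma}^{k+1}\sum_t C_t U_t^{k+1}$ together with the $\gamma_i^{k+1}$-rule, the primal residual $\sum_t C_t U_t^{k+1}$ (which is also $\nabla_{\lambda}\mathcal{L}$) is dominated by the dual residual, i.e.\ by the differences $\|U_t^{k+1}-U_t^k\|$. Finally the $A_t$-block is unchanged from Lemma 5 because its update stays exact: its stationarity gives $\nabla_{A_t}\mathcal{L} = -Q_t(A_t^{k+1}-A_t^k)$, bounded directly by $\|A_t^{k+1}-A_t^k\|$.

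With each block bounded by consecutive-iterate differences, I would assemble the three pieces, apply Cauchy--Schwarz and the elementary inequality $(a+b)^2\le 2a^2+2b^2$, and absorb $L_t$, $\|P_t\|$, $\|Q_t\|$, $\rho$, $\|C_t\|$ and the $\gamma$-dependent constants into a single $c>0$ to obtain $\|\partial\mathcal{L}(\bm{U}^{k+1},\bm{A}^{k+1},\lambda^{k+1})\|^2 \le c\sum_t(\|U_t^k-U_t^{k+1}\|^2 + \|A_t^k-A_t^{k+1}\|^2)$, which is the claim. The step I expect to be the main obstacle is precisely this joint Lipschitz bound on the residual gradient: unlike the exact algorithm, the linearized update forces a change in the $A_t$ argument into the $U_t$-gradient residual, and making the bound on $\|\nabla_{U_t}F_t(U_t^{k+1},A_t^{k+1}) - \nabla_{U_t}F_t(U_t^{k+1},A_t^k)\|$ rigorous requires the boundedness of the iterates to upgrade the merely locally-Lipschitz map $A_t \mapsto \nabla_{U_t}F_t(U_t^{k+1},A_t)$ to a globally-Lipschitz one on the relevant set.
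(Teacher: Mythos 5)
Your proposal is correct and follows essentially the same route as the paper's proof: it mirrors the Lemma 5 argument, substitutes the optimality condition of the linearized subproblem (23) for that of the exact update, and controls the resulting two-argument gradient residual $\nabla_{U_t}F_t(U_t^{k+1},A_t^{k+1})-\nabla_{U_t}F_t(U_t^k,A_t^k)$ by splitting at the intermediate point and invoking Proposition 2 together with the bounded-iterate Lipschitz constant (exactly the convention fixed in the paper's footnote), while the $\lambda$- and $A_t$-blocks are treated as in Lemmas 2, 5 and 6. The only soft spots (implicit upper/lower bounds on $\gamma_i^{k+1}$ when trading primal for dual residuals) are shared with the paper's own treatment, so there is no gap attributable to your argument.
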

\begin{proof}
	See the Appendix G.
\end{proof}

Then based on Lemmas 6 and 7, we can conclude that the convergence of FO-DMTL-ELM as follows.
\begin{theorem}
	Letting $\tau_t\geq L_t+ \rho m (\delta+\frac{1}{2})\sigma_{t,\max}-\frac{\sigma}{2}$ and $\zeta_t\geq0$, the sequence $\{\bm{U}^k,\bm{A}^k,\lambda^k\}$ generated by FO-DMTL-ELM converges to a stationary point $(\bm{U}^*,\bm{A}^*,\lambda^*)$ of $\mathcal{L}$ as $k\to \infty$.
\end{theorem}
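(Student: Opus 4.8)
The plan is to replay the argument of Theorem 1 almost verbatim, substituting Lemma 6 for Lemma 2 and Lemma 7 for Lemma 5, since FO-DMTL-ELM differs from DMTL-ELM only in the linearized $\bm{U}$-update while the $\bm{A}$-update is kept exact. First I would combine the sufficient-descent estimate of Lemma 6 (governing the $\bm{U}$ and $\lambda$ steps) with that of Lemma 3 (still governing the $\bm{A}$ step, as no first-order approximation is applied to $A_t$) to obtain the aggregate inequality
\begin{equation*}
\mathcal{L}\big(\bm{U}^k,\bm{A}^k,\lambda^k\big)-\mathcal{L}\big(\bm{U}^{k+1},\bm{A}^{k+1},\lambda^{k+1}\big)\geq \sum\nolimits_{t}\big(c_{t,1}\big\|U_t^k-U_t^{k+1}\big\|^2+c_{t,2}\big\|A_t^k-A_t^{k+1}\big\|^2\big),
\end{equation*}
now with $c_{t,1}=\tau_t+\frac{\sigma}{2}-L_t-\rho m(\delta+\frac{1}{2})\sigma_{t,\max}$ and $c_{t,2}=\zeta_t+\frac{\sigma}{2}$. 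The hypotheses $\tau_t\geq L_t+\rho m(\delta+\frac{1}{2})\sigma_{t,\max}-\frac{\sigma}{2}$ and $\zeta_t\geq 0$ are precisely what force $c_{t,1},c_{t,2}\geq 0$, so $\mathcal{L}$ is monotonically non-increasing along the FO-DMTL-ELM iterates.

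Next I would establish that $\mathcal{L}$ stays lower bounded and that the sequence $\{\bm{U}^k,\bm{A}^k,\lambda^k\}$ is bounded. The boundedness argument of Lemma 4 does not invoke the closed form (20) of the $\bm{U}$-update; it relies only on the coercivity of the regularizers $g_1,g_2$ and on the structure of the augmented Lagrangian (13), and therefore carries over unchanged to FO-DMTL-ELM. Combined with monotone non-increase, this yields convergence of $\mathcal{L}(\bm{U}^k,\bm{A}^k,\lambda^k)$, and summing the descent inequality over $k$ then forces $\|U_t^k-U_t^{k+1}\|^2\to 0$ and $\|A_t^k-A_t^{k+1}\|^2\to 0$. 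Because the dual step in FO-DMTL-ELM uses the same $\gamma_i^{k+1}$ rule as in Lemma 6, bounding the primal residual by the successive $\bm{U}$-increments, exactly as in the remark following Lemma 2, gives $\|\lambda^k-\lambda^{k+1}\|^2\to 0$ as well.

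Finally, boundedness furnishes a subsequence $(\bm{U}^{k_s},\bm{A}^{k_s},\lambda^{k_s})$ converging to some limit point $(\bm{U}^*,\bm{A}^*,\lambda^*)$. By Lemma 7 the partial subgradient $\partial\mathcal{L}(\bm{U}^{k+1},\bm{A}^{k+1},\lambda^{k+1})$ is dominated by the successive differences, so $\|\partial\mathcal{L}\|\to 0$ along this subsequence; continuity of $f_t,g_1,g_2$ together with Proposition 2 of \cite{Wang2019} then yields $\partial\mathcal{L}(\bm{U}^*,\bm{A}^*,\lambda^*)=0$, certifying that the limit point is stationary. The one genuinely new point relative to Theorem 1, and the step I expect to demand the most care, is checking that the first-order linearization of the $\bm{U}$-block neither destroys the lower-boundedness of Lemma 4 nor spoils the descent, and that the enlarged proximal weight $\tau_t\geq L_t+\cdots$ exactly absorbs the block-coordinate Lipschitz constant $L_t$ so that $c_{t,1}\geq 0$; once these are verified, the remainder of the argument is an unchanged replay of the Theorem 1 proof.
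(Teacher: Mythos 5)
Your proposal is correct and takes essentially the same route as the paper, which simply states that the proof of Theorem 2 mirrors that of Theorem 1; your replay—substituting Lemma 6 for Lemma 2 and Lemma 7 for Lemma 5, keeping Lemma 3 for the exact $\bm{A}$-step, and noting that the boundedness argument of Lemma 4 carries over—is precisely the intended argument, spelled out in more detail than the paper gives.
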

\begin{proof}
The proof is similar as that of Theorem 1.
\end{proof}
 Compared with the results in Theorem 1, it is easy to find that the conditions required for $\tau_t$ has been modified due to the first-order approximation in the update process of $U_t$. As presented in Theorem 2, we need to choose larger proximal penalty $\tau_t$ for FO-DMTL-ELM than that of DMTL-ELM, while the conditions for other parameters are the same.

Note that the Theorem 1 and Theorem 2  present the sufficient conditions for the convergence guarantee. 
Furthermore, convergence analysis for both DMTL-ELM and FO-DMTL-ELM can be applied to the general problem (25) with multi-convex objective $\phi(\bm{U},\bm{A})$.
Since we do not specify the mapping $G(\bm{w},b,\bm{X})$, the results in Theorems 1 and 2 hold for different choice of activation functions but with appropriate parameters correspondingly. 

\section{Numerical Experiments} 
In this section, we will first provide detailed numerical results on the convergence behavior of algorithms MTL-ELM and DMTL-ELM. Then the generalization performance of proposed methods are compared with state-of-the-art approaches. Finally we will discuss the communication requirement of the decentralized method DMTL-ELM over learning accuracy. 
\subsection{Convergence experiments}

We first evaluate the convergence behavior of proposed learning approaches by solving problems (6) and (12). For simplicity, we set $m=5$, $L=\{5,10 \}$, $N_t=\{10,100\}$, $r=2$ and $d=1$. The regularization parameters are chosen as $\mu=\nu=2$. The hidden layer feature matrix $H_t$ and the training labels $T_t$ are generated randomly according to the uniform distribution $\mathcal{U}(0,1)$. Denoting $\bm{H}=[H_1^T,\cdots,H_m^T]^T$, the columns of $\bm{H}$ are normalized. For the decentralized scenario, we consider the network structure illustrated in Fig. 2 (a). We set the parameters $\gamma=1$ and $\rho=1$. With \textit{Prox-linear Proximal} we denote $P_t=\tau_tI-\rho C_t^TC_t$ and $Q_t=\zeta_t I$. Without loss of generality, we set $C_t^TC_t = d_tI$ where $d_t$ is the degree of agent $t$ in graph $\mathcal{G}$.

  \begin{figure} [b] 
 	\vskip 0.2in
 	\begin{center}
 		\centerline{\includegraphics[width=75mm]{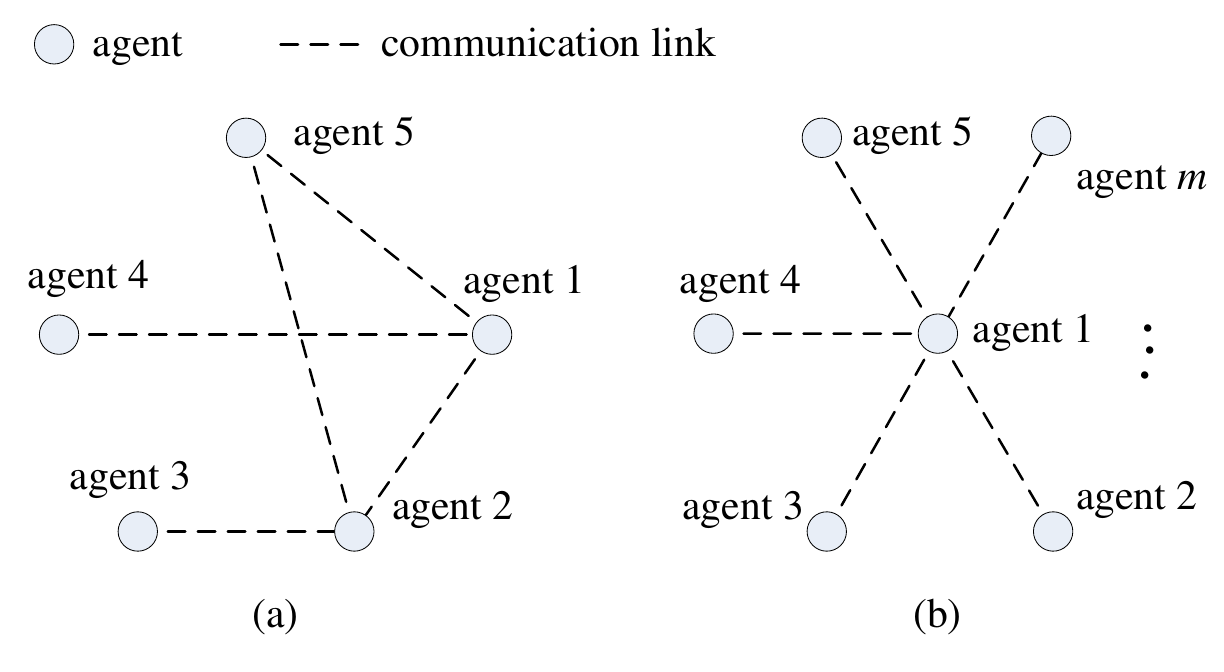}}
 		\caption{Decentralized network structures: (a) fully decentralized structure with 5 agents; (b) master-slave structure for $m$ agents.}
 		\label{11}
 	\end{center}
 	\vskip -0.2in
 \end{figure}

 \begin{figure} [t]    	
	\centering
	\subfigure[ ]{
		\includegraphics[width=4.12 cm]{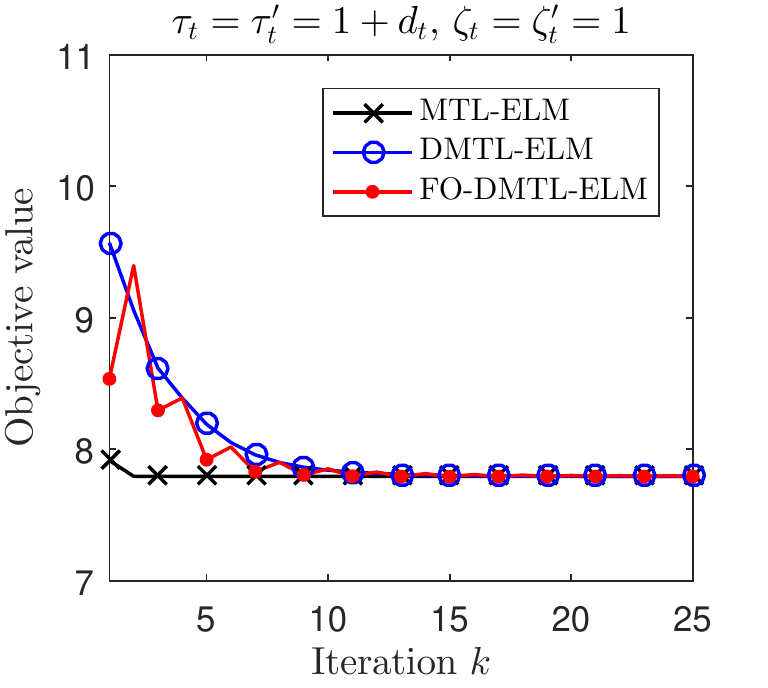}  		 
	}
	\subfigure[ ]{ 
		\includegraphics[width=4.12  cm]{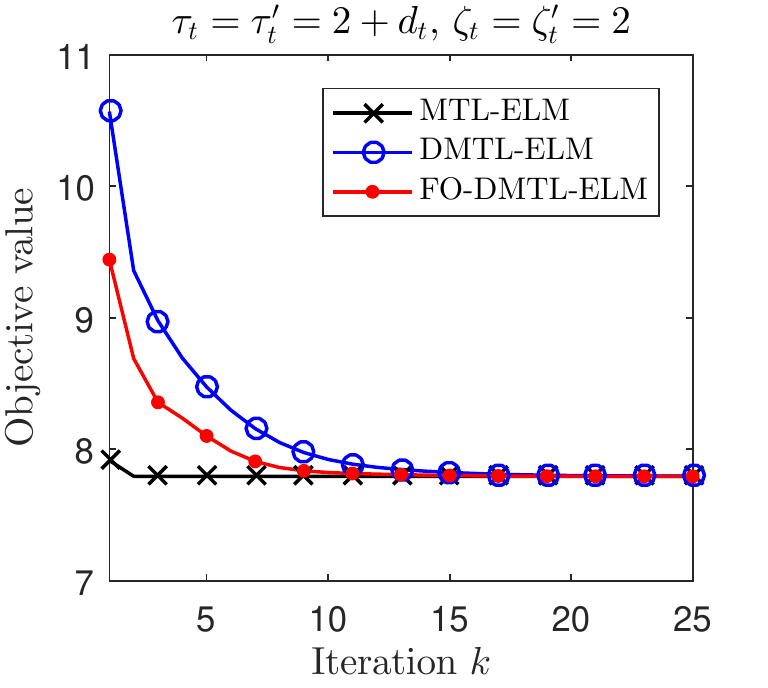}  	 
	}
\subfigure[ ]{
	\includegraphics[width=4.12  cm]{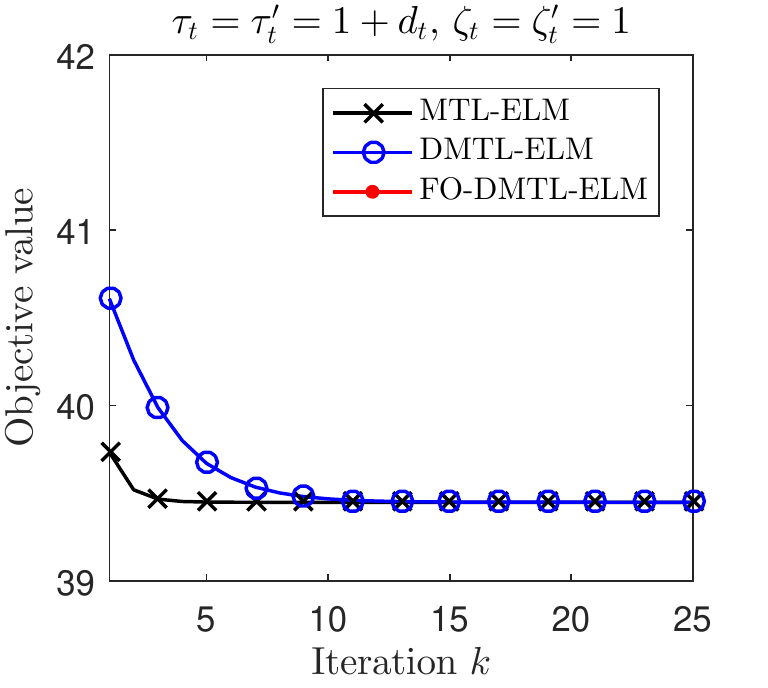}  		 
}
\subfigure[ ]{ 
	\includegraphics[width=4.12  cm]{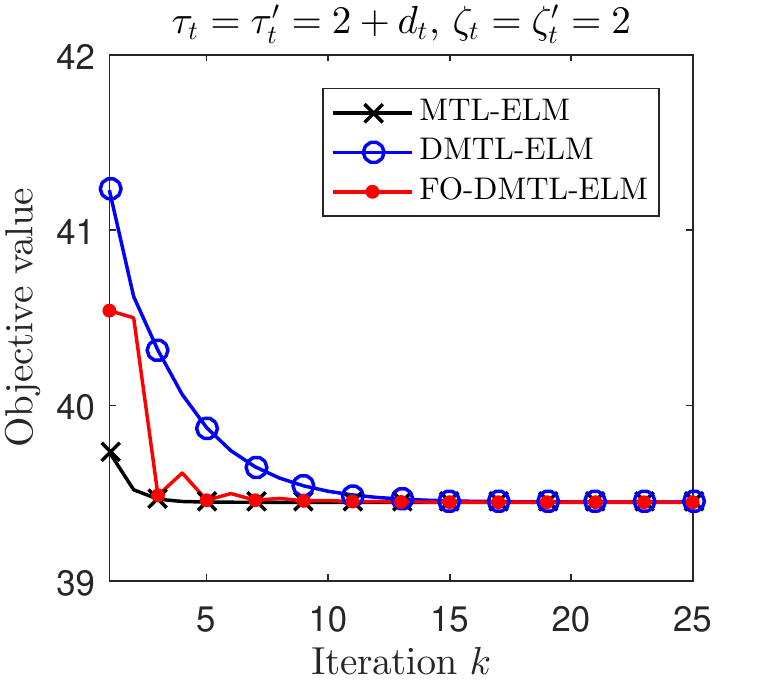}  	 
}
	\caption{Objective value with iterations for proposed methods MTL-ELM, DMTL-ELM and FO-DMTL-ELM.}
	\label{fig:3}
\end{figure}

 The convergence behavior of MTL-ELM, DMTL-ELM and FO-DMTL-ELM are presented in Fig. 3. 
 Since MTL-ELM is a centralized method, it has the fastest convergence speed. We denote $\tau_t,\zeta_t$ the parameters of DMTL-ELM while $\tau_t',\zeta_t'$ as those of FO-DMTL-ELM. For Fig. 3 (a) and (b) with $L=5,N_t=10,\rho=1,\delta=10$ and $\gamma_i^{k+1}=\min\{1, \frac{\delta \| \hat{C}_i\bm{U}^k-\hat{C}_i\bm{U}^{k+1}   \|^2 }{ \| \hat{C}_i\bm{U}^{k+1}  \|^2} \}$, the convergence happens with $\tau_t=\tau_t'\in\{1+d_t,2+d_t\}$ and $\zeta_t=\zeta_t'\in\{1, 2 \}$ since conditions in Theorems 1 and 2 are satisfied. However, comparing (a) with (b), the convergence speed for these two cases is different for both DMTL-ELM and FO-DMTL-ELM. 
 This is because larger parameters $\tau_t(\tau_t')$ and $\zeta_t(\zeta_t')$ reduce the update step of $U_t$ and $A_t$.
 Thus, we can trade off the convergence speed for looser convergence requirement by adjusting the parameters of algorithms DMTL-ELM and FO-DMTL-ELM. Note that there is jitter in the objective value for FO-DMTL-ELM method, which demonstrates that it is sensitive to the parameters $\tau_t'$ and $\zeta_t'$ when they are close to the requirements stated in Theorem 2. In Fig. 3 (b), the FO-DMTL-ELM decreases faster than DMTL-ELM due to the larger updating step introduced by first-order approximation.
 
 While in Fig. 3 (c) and (d), we evaluate proposed methods with $L=10$ and $N_t=100$. As Fig. 3 (c) shown, both MTL-ELM and DMTL-ELM can converge except FO-DMTL-ELM, where $\tau_t=\tau_t'=1+d_t$ and $\zeta_t=\zeta_t'=1$. But when the parameters increases, the FO-DMTL-ELM show convergence despite of jitter. This is because Lipschitz constant $L_t$ is enlarged by increasing dimension $L$. Therefore a larger $\tau_t'$ is required for FO-DMTL-ELM. This also supports the analysis in Theorem 2.

We also reveal the element evolution of $U_t$ and $A_t$ for DMTL-ELM and FO-DMTL-ELM in Fig. 4 (a) and (b), respectively, where $\tau_t= \tau_t'=1+d_t$, $\zeta_t=\zeta_t'=1$ and $L=5$, $N_t=10$. Since $U_t\in\mathbbm{R}^{10}$, we only compare the element $U_t^k(1,1) $ and ${U_t^k}'(1,1)$ with $U^{k=1000}(1,1)$ obtained by MTL-ELM algorithm, which is presented by the dashed line in Fig. 4 (a). From this figure, we can conclude that algorithm DMTL-ELM can ensure all the agents a unique subspace $U^*$. Moreover in Fig. 4 (b), the local predictor $A_t^k(1,1)$ and ${A_t^k}'(1,1)$ of tasks updated in DMTL-ELM and FO-DMTL-ELM algorithms converges to the corresponding $A_t^{k=1000}(1,1)$ from MTL-ELM with carrying out more iterations. Hence the optimal $A^*_t$ can also be guaranteed for each task. The evolution of ${U_t^k}'$ and ${A_t^k}'$ jitters with iterations. This is consistent with Fig. 3 (a). 
  
The accuracy of $U_t^k$ and $A_t^k$, which are defined as $(\frac{1}{mLr}\sum_{t=1}^{m} \|  U_t^k- U^{k=1000} \|^2 )^{1/2}$ and $(\frac{1}{mrd}\sum_{t=1}^{m} \|  A_t^k- A_t^{k=1000} \|^2)^{1/2}$, are presented in Fig. (c) and (d). The accuracy for FO-DMTL-ELM has a faster decrease speed at the beginning of iterations than DMTL-ELM. However, when $k$ grows, the DMTL-ELM can guarantee more accurate outputs.  
 
 \begin{figure}[t]   	
	\centering
	\subfigure[ ]{
		\includegraphics[width=4.1 cm]{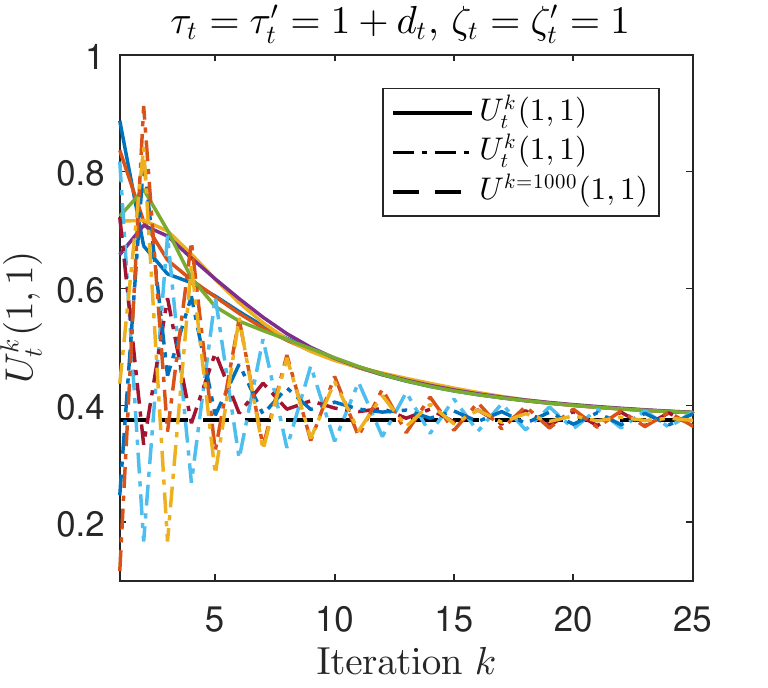}  		 
	}
	\subfigure[ ]{ 
		\includegraphics[width=4.1 cm]{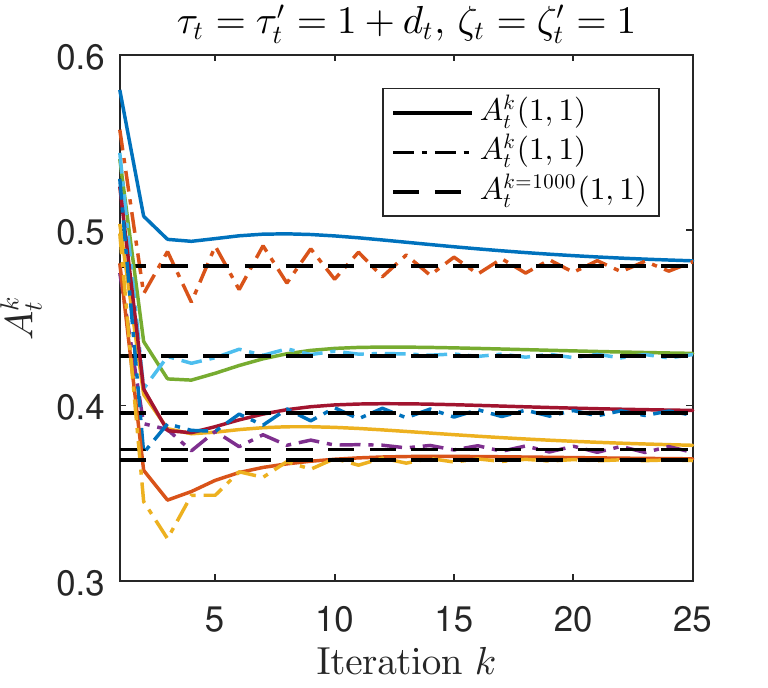}  	 
	}
	\subfigure[ ]{
		\includegraphics[width=4.1 cm]{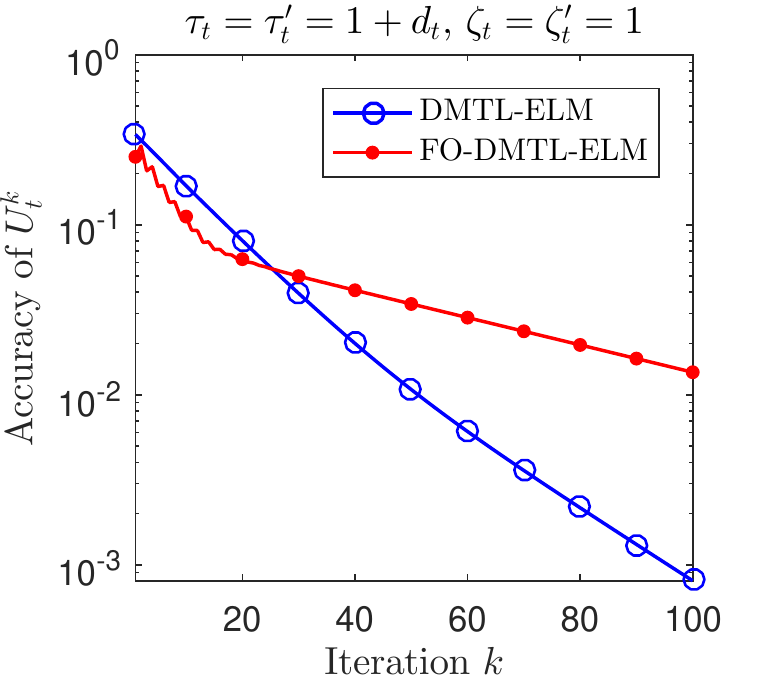}  		 
	}
	\subfigure[ ]{ 
		\includegraphics[width=4.1 cm]{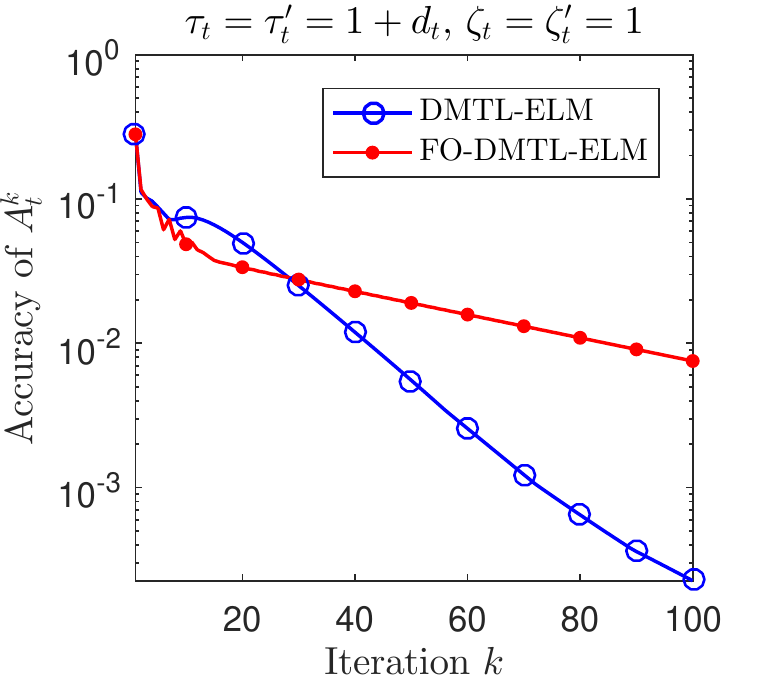}  	 
	}
	\caption{ For algorithms DMTL-ELM and FO-DMTL-ELM: evolution of $U_t^k$ and $A_t^k$; accuracy for $U_t^k$ and $A_t^k$.}
	\label{fig:4}
\end{figure}

\subsection{Generalization Performance}

We perform extensive studies to evaluate the proposed approaches by empirical comparison with the following baselines for centralized and distributed MTL, respectively:

\renewcommand{\arraystretch}{1.2}
\begin{table*}[tp]
	\small
	\centering 
	\fontsize{6.5}{8}\selectfont
	\caption{Comparison of testing error(\%) and running time (s) for different learning approaches and training data sets.}
	
	\label{tab:performance_comparison1}
	\scalebox{1}{
		\begin{tabular}
			{p{1cm}<{\centering} | p{0.5cm}<{\centering} |  p{0.5cm}<{\centering} || p{0.5cm}<{\centering} | p{0.5cm}<{\centering} |  p{0.5cm}<{\centering}|p{0.5cm}<{\centering}|p{0.5cm}<{\centering}|p{0.5cm}<{\centering}|| p{0.5cm}<{\centering} |  p{0.5cm}<{\centering} | p{0.5cm}<{\centering} | p{0.5cm}<{\centering} |  p{0.5cm}<{\centering}|p{0.5cm}<{\centering}|p{0.5cm}<{\centering}|p{0.5cm}<{\centering}  }

			\hline
			\multirow{2}{*}{Dataset}&
			\multicolumn{2}{c ||  }{Local ELM }&\multicolumn{2}{c|}{ MTFL}&\multicolumn{2}{c| }{ GO-MTL}&\multicolumn{2}{c||}{MTL-ELM}&\multicolumn{2}{c|}{DGSP}&\multicolumn{2}{c|}{ DNSP}&\multicolumn{2}{c|}{DMTL-ELM}&\multicolumn{2}{c  }{ FO-DMTL-ELM}\cr\cline{2-17}
			    &testing error & running time  &testing error & running time&testing error & running time&testing error & running time&testing error & running time&testing error & running time&testing error & running time&testing error & running time  \cr\hline
			USPS &4.26 &0.009 & 4.67 &0.10 &  6.30 &7.52 &  {\bf 3.49}&226.1 & 5.05 & 0.03&4.47 &0.04 &{\bf 3.54} & 184.2&3.89&22.5 \cr\hline 
 		
			MNIST &6.58 & 0.004& 6.84 &0.20 & 9.76  &8.10 &  {\bf 5.90} & 244.2& 7.9 & 0.04 &7.35 &0.07 &\textbf{5.96}& 192.6 &  6.20  & 19.7\cr\hline 
	\end{tabular}}
\end{table*}

\noindent
\textit{Separate approach}:
 
\noindent
\textbf{Local ELM}: a baseline single-task learning method utilizing ELM, in which the output weight of each task is learned separately by tasks;

\noindent
\textit{Centralized approaches}:

\noindent
\textbf{Multi-Task Feature Learning (MTFL) \cite{argyriou2008convex}}: an AO based algorithm solving the equivalent convex problem with respect to the weight $\bm{w}_t$ of tasks and the correlation matrix $\bm{\Omega}$;

\noindent
\textbf{Grouping and Overlap for Multi-Task learning (GO-MTL)\cite{kumar2012learning}}: a framework for multi-task learning assuming that each task parameter vector is a linear combination of a finite number of underlying basis tasks;
	
\noindent 
\textit{Distributed approaches}:

\noindent 
\textbf{Distributed Gradient Subspace Pursuit (DGSP)\cite{wang2016distributed2}}: a distributed approach aiming to learn an unknown shared low-dimensional subspace for related tasks. The subspace gets supplemented at all tasks in each iteration; 

\noindent
\textbf{Distributed Newton Subspace Pursuit (DNSP)\cite{wang2016distributed2}}: the same as approach DGSP except the way to pursuit the shared subspace, which substitutes gradient direction with Newton direction in iterations. 
 
Since the methods DGSP and DNSP can only work in the master-slave structure, we consider the decentralized setting for DMTL-ELM and FO-DMTL-ELM shown as Fig. 2 (b). 
 
\begin{figure} 
	\vskip 0.2in
	\begin{center}
		\centerline{\includegraphics[width=70mm]{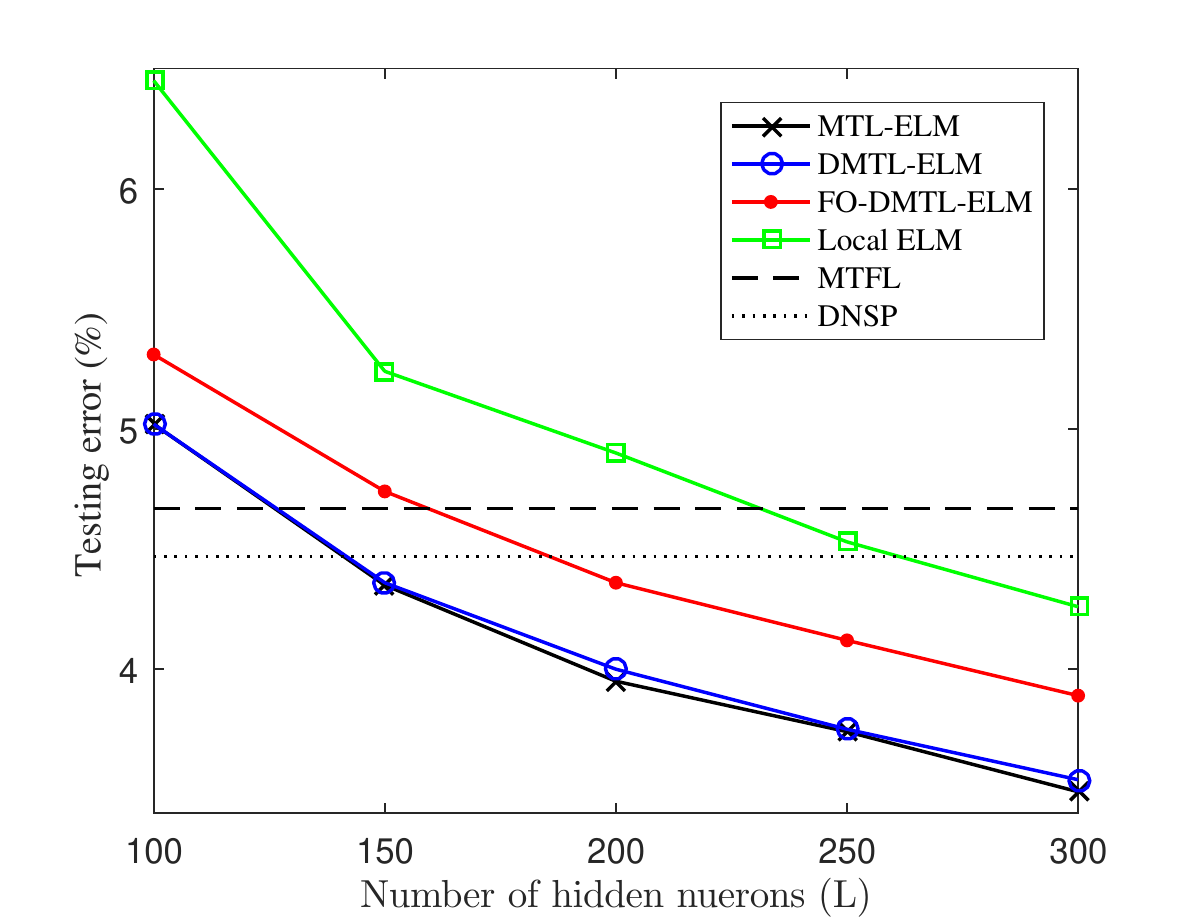}}
		\caption{Testing error comparison for Local ELM, MTL-ELM, DMTL-ELM and FO-DMTL-ELM with state-of-the-art methods over USPS dataset.}
		\label{icml-historical1}
	\end{center}
	\vskip -0.2in
\end{figure}  
 
Our goal is to compare the proposed centralized and decentralized multi-task learning methods with the corresponding baseline approaches. For completeness, we examine the proposed MTL-ELM, DMTL-ELM and FO-DMTL-ELM algorithms as well as state-of-the-art methods over the following two real-world datasets with classification tasks. According to experiments in \cite{huang2012extreme}, 
we adopt Sigmoid additive hidden node of which the activation function is
 \begin{equation}
  \vphantom{\frac{1}{2}}G(\bm{w},b,\bm{X}) = \frac{1}{1+ \exp(-(\bm{w}\cdot\bm{X} + b))}.
 \end{equation}

For algorithms MTL-ELM, DMTL-ELM and FO-DMTL-ELM, we run the iterations for $k=100$ times, respectively. Meanwhile in DMTL-ELM and FO-DMTL-ELM, we use \textit{Standard Proximal} $P_t$ and $Q_t$, and set parameters $\rho=1$, $\delta=100$, $\gamma_i^{k+1}=\min\{1,\frac{\delta \| \hat{C}_i(\bm{U}^k- \bm{U}^{k+1})   \|^2 }{  \| \hat{C}_i\bm{U}^{k+1}  \|^2}  \}$. We run simulations over the following data sets for 100 times and average the testing error and running time.

\noindent
\textbf{USPS digits dataset}:
This is a handwritten digits dataset \cite{kang2011learning} with 10 classes and $256$ input dimension. The images are precessed using principle component analysis (PCA) and dimensionality is reduced to 64 with retaining almost $95\%$ of the variance. 
We extract 1350 samples, 900 samples out of which are used for training while the other 450 samples are used for testing. We set the task number as $m=10$, where each task conducts classification over 3 random classes. Meanwhile the training and testing samples for each task are randomly and equivalently allocated. The regularization parameters of MTL-ELM, DMTL-ELM and FO-DMTL-ELM are chosen as $\lambda=\mu=\sqrt{10}$. We set $\tau_t=10+d_t$ and $\zeta_t=30$ for DMTL-ELM while $\tau_t'=20+d_t,\zeta_t'=30$ for FO-DMTL-ELM. The parameters for MTFL are set as $\gamma=10$ and $\epsilon=20$, while the regularization parameter for other methods is set as $\lambda=10$.

\begin{figure}[t]
	\vskip 0.2in
	\begin{center}
		\centerline{\includegraphics[width=70mm]{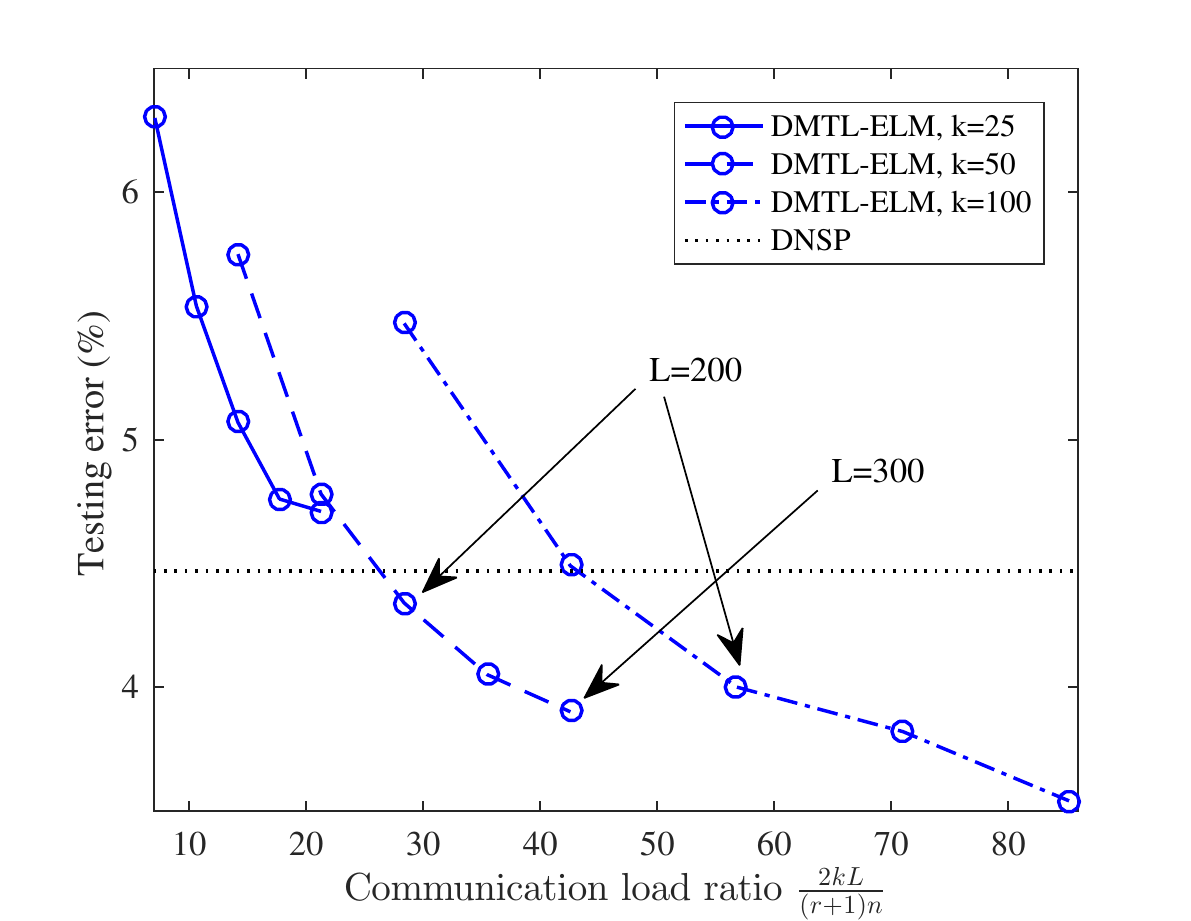}}
		\caption{Testing error of DMTL-ELM with the communication ratio of DMTL-ELM to DNSP over the USPS dataset.}
		\label{icml-histo 2}
	\end{center}
	\vskip -0.2in
\end{figure}  

The generalization performance for proposed methods is presented in Fig. 5. By sharing information across tasks to learn output weights, the MTL-ELM, DMTL-ELM and FO-DMTL-ELM schemes can achieve smaller testing error than the separate learning method Local ELM. 
Since DMTL-ELM can converge to the outputs of MTL-ELM with $k=100$, the DMTL-ELM has ignorable performance loss over testing error compared with MTL-ELM. Due to the first-order approximation, the outputs of FO-DMTL-ELM does not approach the optimum with $k=100$ iterations, the testing error performance decays from DMTL-ELM but still outperform Local-ELM method.   
From Fig. 5, the testing error of MTL-ELM, DMTL-ELM with $L\geq150$ and FO-DMTL-ELM with $L\geq 200$ are smaller than that of MTFL and DNSP methods. Since the dimension of $A_t$ is irrelevant to $L$, the dimension of shared subspace $U$ will be expanded with $L$, which separates the latent tasks more easily.  
Hence the generalization performance for all the methods based on ELM can be improved. The testing errors of Local ELM, MTL-ELM and DMTL-ELM and FO-DMTL-ELM in Table 1 are obtained with $L=300$, where $\tau_t=20+d_t, \zeta_t=40$ for DMTL-ELM and $\tau_t'=30+d_t, \zeta_t'=40$ for FO-DMTL-ELM.

\noindent
\textbf{MNIST digits dataset}:
This is also a digit dataset \cite{kang2011learning} with 10 classes and $784$ input dimensions. We reduce the dimensions of images to 87 by using PCA.
We extract 1350 samples, 900 samples out of which are used for training while the other 450 samples are used for testing. We set the task number as $m=10$, where each task conducts classification over 3 random classes. The training and testing samples for each task are randomly and equivalently allocated.
The regularization parameters of MTL-ELM, DMTL-ELM and FO-DMTL-ELM are chosen as $\lambda=\mu=\sqrt{20}$, while $\lambda=100$ for GO-MTL. We set the parameters for MTFL as $\gamma=10$ and $\epsilon=20$, and the regularization parameter for other methods is set as $\lambda=20$. To obtain the testing errors given in Table 1, we set the number of hidden neurons as $L=300$ for Local ELM, MTL-ELM and DMTL-ELM methods, and $\tau_t=20+d_t, \zeta_t=40$ for DMTL-ELM and $\tau_t'=30+d_t, \zeta_t'=40$ for FO-DMTL-ELM.

Table 1 demonstrates that the testing error achieved by MTL-ELM, DMTL-ELM and FO-DMTL-ELM can outperform other MTL methods over the tested data sets. Meanwhile, the proposed DMTL-ELM algorithm can achieve almost the same testing error compared with centralized MTL-ELM. Though with decayed generalization performance with same iteration $k$, the running time of FO-DMTL-ELM is much faster than MTL-ELM and DMTL-ELM. Among all the tested methods, the Loal ELM has the shortest running time.

\subsection{Communication load of DMTL}

The communication load of distributed learning approaches is important in real-word applications, since too much information exchanged can cause system overload. Since the communication load of DMTL-ELM and FO-DMTL-ELM are the same when the iterations $k$ is identical. Then we only evaluate the communication loads and testing error of the proposed DMTL-ELM algorithm compared with the DMTL method DNSP, which can achieve better generalization performance than DGSP. In each iteration of the DMTL-ELM algorithm, the information that agent $t$ broadcasts to its neighbors is only the updated $U_t$. Hence while the decentralized network structure is fixed, the communication load of DMTL-ELM is determined by the number of iterations $k$ and dimension of $U_t$, which is the number of hidden neurons $L$. In each iteration of the DNSP method, which works in the master-slave network structure, the updated $U_t$ in slaves are first sent to the master. Then a new column $u$ of the desired subspace $U$ is calculated and broadcast to slaves. But the difference between DNSP and DMTL-ELM method is that the iterations of the former is only $r$, which is the number of latent tasks, instead of the $k$ in the latter one. Thus for the master-slave structure, the communication load of DMTL-ELM is $\frac{2kL}{(r+1)n}$ times that of DNSP.

In Fig. 6, we present the testing error of DMTL-ELM algorithm with respect to the communication load ratio of DMTL-ELM to DNSP method when using USPS dataset, where $L\in\{100,150,200,250,300\}$ and $k\in\{25,50,100\}$.  
Since the number of latent tasks is identical, increasing $k$ and $L$ will enlarge the communication load of DMTL-ELM, as well as guarantee a smaller testing error. 
Thus we can conclude that there exists a trade-off between the generalization performance of DMTL-ELM and the communication load. For the considered regions of $k$ and $L$ in Fig. 6, the communication load of DNSP is smaller than that of DMTL-ELM since the ratio is larger than $1$. When we fix the iteration rounds as $k=25$ of DMTL-ELM, the testing error of DNSP is always smaller than that of DMTL-ELM even rough increasing the hidden neurons $L$ in considered region. This is because the outputs $\{U_t\}$ and $\{A_t\}$ of DMTL-ELM with $k=25$ are far away from the optimal value. However, when $k$ increase to $50$ and $100$, the DMTL-ELM can achieve better generalization performance than DNSP by enlarging $L$, with larger communication load than DNSP. Moreover from Fig. 6, when we set $k=50,L=300$,
the generalization performance is better than that of $k=50,L=200$, but with a smaller communication cost. Thus, in real applications, we can choose appropriate values of $k$ and $L$ to balance the communication load and generalization performance of DMTL-ELM.

\section{Conclusions}

We study the multi-task learning problem based on an ELM implementation. We first introduce the centralized MTL problem based on ELM, and present the AO-based algorithm MTL-ELM to solve the problem. Then we extend to the decentralized scenario by formulating the DMTL problem as a majorized multi-block optimization with coupled bi-convex objective functions. It is solved by our proposed algorithm DMTL-ELM, which is a hybrid Jacobian and Gauss-Seidel ADMM. The computation load of DMTL-ELM is further reduced by introducing first-order approximation in the proposed FO-DMTL-ELM algorithm. Through analysis we prove the convergence of MTL-ELM, DMTL-ELM and FO-DMTL-ELM, as well as presenting the required conditions for parameters. Simulations verified the convergence of all the approaches, and show that the generalization performance of MTL-ELM, DMTL-ELM and FO-DMTL-ELM can outperform state-of-the-art MTL methods by randomly mapping the input of each task to a hidden feature space and learning the output weights together across tasks. Moreover through adjusting the number of hidden neurons of ELM and iteration rounds, we present the trade-off between communication load and the generalization performance of DMTL-ELM algorithm.

\bibliography{ref}

\begin{thebibliography}{10}
\providecommand{\url}[1]{#1}
\csname url@samestyle\endcsname
\providecommand{\newblock}{\relax}
\providecommand{\bibinfo}[2]{#2}
\providecommand{\BIBentrySTDinterwordspacing}{\spaceskip=0pt\relax}
\providecommand{\BIBentryALTinterwordstretchfactor}{4}
\providecommand{\BIBentryALTinterwordspacing}{\spaceskip=\fontdimen2\font plus
\BIBentryALTinterwordstretchfactor\fontdimen3\font minus
  \fontdimen4\font\relax}
\providecommand{\BIBforeignlanguage}[2]{{%
\expandafter\ifx\csname l@#1\endcsname\relax
\typeout{** WARNING: IEEEtran.bst: No hyphenation pattern has been}%
\typeout{** loaded for the language `#1'. Using the pattern for}%
\typeout{** the default language instead.}%
\else
\language=\csname l@#1\endcsname
\fi
#2}}
\providecommand{\BIBdecl}{\relax}
\BIBdecl

\bibitem{caruana1997multitask}
R.~Caruana, ``Multitask learning,'' \emph{Machine learning}, vol.~28, no.~1,
  pp. 41--75, 1997.

\bibitem{evgeniou2004regularized}
T.~Evgeniou and M.~Pontil, ``{Regularized multi--task learning},'' in
  \emph{Proceedings of the tenth ACM SIGKDD international conference on
  Knowledge discovery and data mining}.\hskip 1em plus 0.5em minus 0.4em\relax
  ACM, 2004, pp. 109--117.

\bibitem{zhou2016flexible}
Q.~Zhou and Q.~Zhao, ``Flexible clustered multi-task learning by learning
  representative tasks.'' \emph{IEEE Trans. Pattern Anal. Mach. Intell.},
  vol.~38, no.~2, pp. 266--278, 2016.

\bibitem{argyriou2007multi}
A.~Argyriou, T.~Evgeniou, and M.~Pontil, ``Multi-task feature learning,'' in
  \emph{Advances in neural information processing systems}, 2007, pp. 41--48.

\bibitem{argyriou2008convex}
------, ``Convex multi-task feature learning,'' \emph{Machine Learning},
  vol.~73, no.~3, pp. 243--272, 2008.

\bibitem{amit2007uncovering}
Y.~Amit, M.~Fink, N.~Srebro, and S.~Ullman, ``Uncovering shared structures in
  multiclass classification,'' in \emph{Proceedings of the 24th international
  conference on Machine learning}.\hskip 1em plus 0.5em minus 0.4em\relax ACM,
  2007, pp. 17--24.

\bibitem{kang2011learning}
Z.~Kang, K.~Grauman, and F.~Sha, ``Learning with whom to share in multi-task
  feature learning.'' in \emph{ICML}, 2011, pp. 521--528.

\bibitem{kumar2012learning}
A.~Kumar and H.~Daume~III, ``Learning task grouping and overlap in multi-task
  learning,'' \emph{arXiv preprint arXiv:1206.6417}, 2012.

\bibitem{romera2012exploiting}
B.~Romera-Paredes, A.~Argyriou, N.~Berthouze, and M.~Pontil, ``Exploiting
  unrelated tasks in multi-task learning,'' in \emph{International Conference
  on Artificial Intelligence and Statistics}, 2012, pp. 951--959.

\bibitem{zhang2012convex}
Y.~Zhang and D.-Y. Yeung, ``A convex formulation for learning task
  relationships in multi-task learning,'' \emph{arXiv preprint
  arXiv:1203.3536}, 2012.

\bibitem{li2018better}
Y.~Li, X.~Tian, T.~Liu, and D.~Tao, ``On better exploring and exploiting task
  relationships in multitask learning: Joint model and feature learning,''
  \emph{IEEE transactions on neural networks and learning systems}, vol.~29,
  no.~5, pp. 1975--1985, 2018.

\bibitem{chen2013convex}
J.~Chen, L.~Tang, J.~Liu, and J.~Ye, ``A convex formulation for learning a
  shared predictive structure from multiple tasks,'' \emph{IEEE transactions on
  pattern analysis and machine intelligence}, vol.~35, no.~5, pp. 1025--1038,
  2013.

\bibitem{nassif2016proximal}
R.~Nassif, C.~Richard, A.~Ferrari, and A.~H. Sayed, ``Proximal multitask
  learning over networks with sparsity-inducing coregularization,'' \emph{IEEE
  Transactions on Signal Processing}, vol.~64, no.~23, pp. 6329--6344, 2016.

\bibitem{li2017distributed}
J.~Li, T.~Arai, Y.~Baba, H.~Kashima, and S.~Miwa, ``Distributed multi-task
  learning for sensor network,'' in \emph{Joint European Conference on Machine
  Learning and Knowledge Discovery in Databases}.\hskip 1em plus 0.5em minus
  0.4em\relax Springer, 2017, pp. 657--672.

\bibitem{li2018distributed}
C.~Li, S.~Huang, Y.~Liu, and Z.~Zhang, ``Distributed jointly sparse multitask
  learning over networks,'' \emph{IEEE transactions on cybernetics}, vol.~48,
  no.~1, pp. 151--164, 2018.

\bibitem{zhang2018distributed}
C.~Zhang, P.~Zhao, S.~Hao, Y.~C. Soh, B.~S. Lee, C.~Miao, and S.~C. Hoi,
  ``Distributed multi-task classification: a decentralized online learning
  approach,'' \emph{Machine Learning}, vol. 107, no.~4, pp. 727--747, 2018.

\bibitem{liu2017distributed}
S.~Liu, S.~J. Pan, and Q.~Ho, ``Distributed multi-task relationship learning,''
  in \emph{Proceedings of the 23rd ACM SIGKDD International Conference on
  Knowledge Discovery and Data Mining}.\hskip 1em plus 0.5em minus 0.4em\relax
  ACM, 2017, pp. 937--946.

\bibitem{smith2017federated}
V.~Smith, C.-K. Chiang, M.~Sanjabi, and A.~S. Talwalkar, ``Federated multi-task
  learning,'' in \emph{Advances in Neural Information Processing Systems},
  2017, pp. 4424--4434.

\bibitem{wang2018distributed}
W.~Wang, J.~Wang, M.~Kolar, and N.~Srebro, ``Distributed stochastic multi-task
  learning with graph regularization,'' \emph{arXiv preprint arXiv:1802.03830},
  2018.

\bibitem{wang2016distributed1}
J.~Wang, M.~Kolar, and N.~Srerbo, ``Distributed multi-task learning,'' in
  \emph{Artificial Intelligence and Statistics}, 2016, pp. 751--760.

\bibitem{hua2017distributed}
J.~Hua, C.~Li, and H.-L. Shen, ``Distributed learning of predictive structures
  from multiple tasks over networks,'' \emph{IEEE Transactions on Industrial
  Electronics}, vol.~64, no.~5, pp. 4246--4256, 2017.

\bibitem{wang2016distributed2}
J.~Wang, M.~Kolar, and N.~Srebro, ``Distributed multi-task learning with shared
  representation,'' \emph{arXiv preprint arXiv:1603.02185}, 2016.

\bibitem{beck2015convergence}
A.~Beck, ``On the convergence of alternating minimization for convex
  programming with applications to iteratively reweighted least squares and
  decomposition schemes,'' \emph{SIAM Journal on Optimization}, vol.~25, no.~1,
  pp. 185--209, 2015.

\bibitem{boyd2011distributed}
S.~Boyd, N.~Parikh, E.~Chu, B.~Peleato, J.~Eckstein \emph{et~al.},
  ``Distributed optimization and statistical learning via the alternating
  direction method of multipliers,'' \emph{Foundations and
  Trends{\textregistered} in Machine learning}, vol.~3, no.~1, pp. 1--122,
  2011.

\bibitem{luo2017distributed}
M.~Luo, L.~Zhang, J.~Liu, J.~Guo, and Q.~Zheng, ``Distributed extreme learning
  machine with alternating direction method of multiplier,''
  \emph{Neurocomputing}, vol. 261, pp. 164--170, 2017.

\bibitem{Deng2017}
W.~Deng, M.-J. Lai, Z.~Peng, and W.~Yin, ``Parallel multi-block admm with o(1 /
  k) convergence,'' \emph{Journal of Scientific Computing}, vol.~71, no.~2, pp.
  712--736, May 2017.

\bibitem{xu2018hybrid}
Y.~Xu, ``Hybrid jacobian and gauss--seidel proximal block coordinate update
  methods for linearly constrained convex programming,'' \emph{SIAM Journal on
  Optimization}, vol.~28, no.~1, pp. 646--670, 2018.

\bibitem{Gao2017}
X.~Gao and S.-Z. Zhang, ``First-order algorithms for convex optimization with
  nonseparable objective and coupled constraints,'' \emph{Journal of the
  Operations Research Society of China}, vol.~5, no.~2, pp. 131--159, Jun 2017.

\bibitem{cui2016convergence}
Y.~Cui, X.~Li, D.~Sun, and K.-C. Toh, ``On the convergence properties of a
  majorized alternating direction method of multipliers for linearly
  constrained convex optimization problems with coupled objective functions,''
  \emph{Journal of Optimization Theory and Applications}, vol. 169, no.~3, pp.
  1013--1041, 2016.

\bibitem{beck2013convergence}
A.~Beck and L.~Tetruashvili, ``On the convergence of block coordinate descent
  type methods,'' \emph{SIAM journal on Optimization}, vol.~23, no.~4, pp.
  2037--2060, 2013.

\bibitem{liu2018proximal}
F.~Liu, L.~Xu, Y.~Sun, and D.~Han, ``A proximal alternating direction method
  for multi-block coupled convex optimization,'' \emph{Journal of Industrial \&
  Management Optimization}, pp. 1171--1197, 2018.

\bibitem{Banjac2018}
G.~{Banjac}, K.~{Margellos}, and P.~J. {Goulart}, ``On the convergence of a
  regularized jacobi algorithm for convex optimization,'' \emph{IEEE
  Transactions on Automatic Control}, vol.~63, no.~4, pp. 1113--1119, April
  2018.

\bibitem{Xu2013}
Y.~Xu and W.~Yin, ``A block coordinate descent method for regularized
  multiconvex optimization with applications to nonnegative tensor
  factorization and completion,'' \emph{SIAM Journal on Imaging Sciences},
  vol.~6, no.~3, pp. 1758--1789, 2013.

\bibitem{Wang2019}
Y.~Wang, W.~Yin, and J.~Zeng, ``Global convergence of admm in nonconvex
  nonsmooth optimization,'' \emph{Journal of Scientific Computing}, vol.~78,
  no.~1, pp. 29--63, Jan 2019.

\bibitem{huang2006extreme}
G.-B. Huang, Q.-Y. Zhu, and C.-K. Siew, ``Extreme learning machine: theory and
  applications,'' \emph{Neurocomputing}, vol.~70, no. 1-3, pp. 489--501, 2006.

\bibitem{bi2015distributed}
X.~Bi, X.~Zhao, G.~Wang, P.~Zhang, and C.~Wang, ``Distributed extreme learning
  machine with kernels based on mapreduce,'' \emph{Neurocomputing}, vol. 149,
  pp. 456--463, 2015.

\bibitem{huang2012extreme}
G.-B. Huang, H.~Zhou, X.~Ding, and R.~Zhang, ``Extreme learning machine for
  regression and multiclass classification,'' \emph{IEEE Transactions on
  Systems, Man, and Cybernetics, Part B (Cybernetics)}, vol.~42, no.~2, pp.
  513--529, 2012.

\end{thebibliography}
\bibliographystyle{IEEEtran}

\end{document}